\documentclass[11pt]{article}
\usepackage[margin=1in]{geometry}
\usepackage{amsmath,amssymb,amsthm}
\usepackage{natbib}
\usepackage{hyperref}
\usepackage{graphicx}
\usepackage{xcolor}
\usepackage[algo2e]{algorithm2e}
\usepackage{algorithm}

\usepackage{float}
\usepackage{authblk}
\usepackage{bbm}

\DeclareDocumentCommand \norm { o m }{{\lVert #2 \rVert_#1}}

\newtheorem{theorem}{Theorem}[section]      % Theorem 1.1, 1.2, ... per chapter
\newtheorem{lemma}[theorem]{Lemma}          % shares the theorem counter
\newtheorem{corollary}[theorem]{Corollary}
\newtheorem{definition}[theorem]{Definition}

\newtheorem{proposition}[theorem]{Proposition}

\newcommand{\loss}{\mathcal{L}}

\newcommand{\R}{\mathbb{R}}

\newcommand{\cF}{\mathcal{F}}

\newcommand{\bN}{\mathbb{N}}
\newcommand{\cD}{P}

\renewcommand{\Pr}{\textbf{Pr}}
\newcommand{\E}{\mathbb{E}}

\newcommand{\cX}{\mathcal{X}}
\newcommand{\cY}{\mathcal{Y}}

\newcommand{\cH}{\mathcal{H}}

\DeclareFontFamily{OT1}{pzc}{}
\DeclareFontShape{OT1}{pzc}{m}{it}{<-> s * [1.10] pzcmi7t}{}
\DeclareMathAlphabet{\pzccal}{OT1}{pzc}{m}{it}

\title{Model Agreement via Anchoring}

\author[1]{Eric Eaton}
\author[1]{Surbhi Goel}
\author[1]{Marcel Hussing}
\author[1]{Michael Kearns}
\author[1]{\authorcr Aaron Roth}
\author[1]{Sikata Sengupta}
\author[2]{Jessica Sorrell}
\affil[1]{Department of Computer and Information Sciences, University of Pennsylvania}
\affil[2]{Department of Computer Science, The Johns Hopkins University}

\begin{document}
\maketitle
\begin{abstract}

Numerous lines of aim to control \emph{model disagreement} --- the extent to which two machine learning models disagree in their predictions. We adopt a simple and standard notion of model disagreement in real-valued prediction problems, namely the expected squared difference in predictions between two models trained on independent samples, without any coordination of the training processes.
We would like to be able to drive disagreement to zero with some natural parameter(s) of the training procedure using analyses that can be 
applied to existing training methodologies.

We develop a simple general technique for proving bounds on independent model disagreement 
based on \emph{anchoring} to the average of two models within the analysis. We then apply this technique to prove disagreement bounds for four commonly used machine learning algorithms: (1) stacked aggregation over an arbitrary model class (where disagreement is driven to 0 with the number of models $k$ being stacked) (2) gradient boosting (where disagreement is driven to 0 with the number of iterations $k$) (3) neural network training with architecture search (where disagreement is driven to 0 with the size $n$ of the architecture being optimized over) and (4) regression tree training over all  regression trees of fixed depth (where disagreement is driven to 0 with the depth $d$ of the tree architecture).  For clarity, we work out our initial bounds in the setting of one-dimensional regression with squared error loss --- but then show that all of our results generalize to multi-dimensional regression with any strongly convex loss. 

\end{abstract}

\section{Introduction}
\label{sec:intro}
Two predictive models $f_1,f_2:\mathcal{X}\rightarrow \mathbb{R}$, trained on data sampled from the same distribution $\mathcal{D}$, might frequently \emph{disagree} in the sense that on a typical test example $x \sim \mathcal{D}$, $f_1(x)$ and $f_2(x)$ take very different values. In fact, this can happen even when the two models are trained on the same dataset, if the model class is not convex and the training process is stochastic. This kind of model \emph{disagreement}, sometimes known as model or predictive multiplicity  \citep{marx2020predictive,black2022model,roth2025resolving} or the \emph{Rashomon effect} \citep{breiman2001statistical}, is a concern for many different reasons. Pragmatically, predictions are used to inform downstream actions, and two models that make different predictions produce ambiguity about which is the best action to take when we can only take one. This has led to a literature on how two predictive models (or a predictive model and a human) can engage in short test-time interactions so as to ``agree'' on a single prediction or action that is more accurate than either model could have made alone \citep{aumann1976agreeing,aaronson2005complexity,donahue2022human,frongillo2023agreement,peng2025no,collina2025tractable,collina2026collaborative}. In industrial applications, this same phenomenon is known as model or predictive \emph{churn}; there is a large body of work that aims to reduce it, because churn for predictions in ways that do not produce accuracy improvements can needlessly disrupt downstream pipelines built around an initial model  \citep{milani2016launch,bahri2021locally,hidey-etal-2022-reducing,watson2024predictive}. The phenomenon of predictive multiplicity has led to concern about the potential arbitrariness of decisions informed by statistical models, and hence the procedural fairness of using such models in high-stakes settings \cite{marx2020predictive,black2022model,watson2024predictive}. The same  phenomenon is what underlies the desire for \emph{replicability} of machine learning algorithms, which has recently attracted widespread study \citep{impagliazzo2022reproducibility,bun2023stability,eaton2023replicable,kalavasis2024computational,kalavasis2024replicable,karbasi2023replicability,diakonikolas2025replicable,eaton2025replicable}.

In this paper we ask when training on independent samples from a common distribution
results in models that approximately agree on most inputs. Unlike the (model) agreement literature \citep{aumann1976agreeing,aaronson2005complexity,collina2025tractable} we want approximate agreement ``out of the box'', without the need for any test-time interaction or coordination. And unlike the literature on replicability \citep{impagliazzo2022reproducibility,bun2023stability,eaton2023replicable,karbasi2023replicability}, we do not want our analyses to apply only to custom-designed (and often impractical) algorithms: we want methods for analyzing existing families of practical learning algorithms. We continue a discussion of additional related work in  Section \ref{subsec:related_work}.

\subsection{Our Results}
Our notion of approximate model agreement is that the expected squared difference between two models $f_1$ and $f_2$ should be small: $D(f_1,f_2) := \E_{x \sim \cD}[(f_1(x)-f_2(x))^2] \leq \varepsilon$. Our goal is to show that for broad classes of model training methods, this disagreement level $\varepsilon$ can be driven to $0$ with some tunable parameter of the method. We aim for high agreement in this sense via independent training, i.e., without the need for any interaction or coordination between the learners beyond the fact that they are sampling data from a common distribution. We give an abstract recipe for establishing guarantees like this based on a ``midpoint anchoring argument'' and then give four applications of the recipe: (1)~to the popular ensembling technique of ``stacking'', (2)~to gradient boosting and similar methods that iteratively build up linear combinations over a class of base models, (3)~to neural network training with architecture search, and (4)~to regression tree training over all  regression trees of bounded depth. 
For  clarity, we first establish all of our guarantees for models that solve a one dimensional regression problem to optimize for squared loss, but we then show how our results generalize to multi-dimensional strongly convex loss functions.
We include our results on these generalizations in Section \ref{sec:generalize}.
\subsubsection{The Midpoint Anchoring Method}
Our core technique is built around  a  simple ``midpoint identity'' for squared loss. For the sake of completeness, we provide a proof in Section \ref{sec:prelims}. This identity is a special case ($m=2$) of what is also known as the ambiguity decomposition in the literature \citet{krogh1994neural,jiang2017generalized,wood2023unified}. The decomposition breaks down the average ensemble model's loss into the average losses of the individual ensemble members and an `ambiguity' term measuring the disagreement of members from the average ensemble. For any two predictors $f_1,f_2:\mathcal{X}\rightarrow \mathbb{R}$, let $\bar f(x):=\tfrac{1}{2}(f_1(x)+f_2(x))$ denote the (hypothetical) model  corresponding to their average. Then

\[
\mathrm{MSE}(\bar f)=\frac{\mathrm{MSE}(f_1)+\mathrm{MSE}(f_2)}{2}-\frac{D(f_1,f_2)}{4}.
\]
This decomposition is usually used to upper bound the loss of an explicitly realized ensemble model $\bar f$. We use it as a way to bound $D(f_1,f_2)$:
\[
  D(f_1,f_2)
  \,=\,
  2\Big(\mathrm{MSE}(f_1)+\mathrm{MSE}(f_2)-2\,\mathrm{MSE}(\bar f)\Big).
\]
For us, the ensemble model $\bar f$ need not ever be realized, except as a thought experiment. The identity reduces proving independent disagreement bounds (the goal of this paper) to bounding the \emph{error gap} between the constituent models $f_1$ and $f_2$ and their average. If $\bar f$ lies in the same hypothesis class $\mathcal{H}$ as $f_1$ and $f_2$, then this error gap can be bounded by any  convergence analysis that establishes that $\mathrm{MSE}(f)$ will approach  error optimality within $\mathcal{H}$. More frequently, for non-convex  classes,  $\bar f$ will  not be representable within  the same class of functions as $f_1$ and  $f_2$ --- but for many natural concept classes, the average of two models trained  within some class of models parameterized by a measure  of complexity (size,  depth) will be representable within  a class that is ``not much  larger''. This  will give us  stability guarantees in  terms of the ``local learning curve'' of this complexity parameter, which because of error boundedness and monotonicity must tend to zero at values of the complexity parameter that can be bounded independently of the instance.

All of our stability bounds are ``agnostic'' in the sense that they hold without any distributional or realizability assumptions. In other words, our bounds will always follow from the ability to \emph{optimize} within given model classes, without needing to assume that the model class is able to represent the relationship between the features and the labels to any non-trivial degree.

It is instructive at the outset to compare the midpoint anchoring method to a more naive methods for establishing agreement bounds. Any pair of models $f_1$ and $f_2$ that both have almost perfect accuracy in the sense that $\textrm{MSE}(f_1), \textrm{MSE}(f_2) \leq \varepsilon$ must also satisfy $D(f_1,f_2) \leq O(\varepsilon)$. This follows by anchoring on hypothetical perfect predictions $f^*(x) = y$. Of course, such bounds will rarely apply because very few  settings are compatible with near perfect prediction. The benefit of our more general midpoint anchoring method is that it will allow us to argue for independent model agreement without needing to make any realizability assumptions --- high accuracy is not needed for high agreement, as if $f_1$ and $f_2$ have high  error, so might the average model $\bar f$.

\subsubsection{Applications: Ensembling, Boosting, Neural Nets, and Regression Trees}
We choose our four applications below to show the various ways in which we can apply our  method in settings that are progressively more challenging. First, as a warm-up, we study stacked aggregation, which  ensembles independently trained models. We show how the midpoint anchoring method can recover strong agreement results as a function of the \emph{local error curve}. 

Next, we study gradient boosting. Gradient boosting, like stacking, learns a linear combination of base models, but unlike stacking, does not rely on independently trained models. The models in gradient boosting are found by adaptively and iteratively solving a ``weak learning'' problem. As our midpoint anchoring method does not rely on model independence, we are still able to use it to recover strong agreement bounds tending to $0$ at a rate of $O(1/k)$, where $k$ is the number of iterations of gradient boosting. 

The constituent models used in gradient boosting can be arbitrary and non-convex (e.g., depth 5 regression trees), but the aggregation method is still linear and is  implicitly approximating a (infinite dimensional) convex optimization problem --- minimizing mean squared error amongst linear models in the span of the set of weak learner models. One might wonder if the kind of agreement bounds we are able to prove are implicitly relying on this convexity. In our third and fourth applications, we see that the answer is no. We study error minimization over arbitrary ReLU neural networks of size $n$ (implying architecture search) as well as arbitrary regression trees of depth $d$. These are highly non-convex optimization landscapes. Thus approximate error minimizers can generally be very far from agreement in parameter space. Nevertheless, we are able to apply our midpoint anchoring method to show strong bounds on agreement that can be driven to $0$ as a function of the size of the neural network $n$ in the first case and the depth of the regression tree $d$ in the second case, recovering agreement in prediction space \emph{despite} arbitrary disagreement in parameter space.

\subsubsection{Warmup: Stacking and Local Training Curve Bounds}
In Section \ref{sec:stacking} we apply our recipe to establish the stability of stacking, also known as stacked aggregation or stacked regression. Stacking \citep{wolpert1992stacked,breiman1996stacked} is a simple, popular model ensembling technique in which  we independently learn $k$ base models, and then combine them by training a regression model on top of them, using the predictions of the base models as features. To model independent training in reduced form, we imagine that there is a fixed distribution $Q$ on models $g:\cX\rightarrow \mathbb{R}$ that a learner can sample from. Sampling a new model  $g$ from $Q$ represents the induced distribution on models from (as one example) sampling a fresh dataset $D$ of some size from the underlying data distribution $\cD$, and then running an arbitrary (possibly randomized) model training procedure on the sample $D$. We make no assumptions on the form of the distribution $Q$, and hence no assumptions about the nature of the underlying model training procedure or the underlying model class. A learner samples $k$ models $G = \{g_1,\ldots,g_k\}$ independently from $Q$, and then ensembles them by training a linear regression model $f_1$ to minimize squared error, using $G$ as its feature space. A second independent learner running the same procedure corresponds to sampling a different set of $k$ models $G' = \{g'_1,\ldots,g'_k\}$, also independently from $Q$, and then solving for a linear regression model $f_2$ minimizing squared error using $G'$ as its feature space. Via the midpoint anchoring method we argue that we can quickly drive the model disagreement $D(f_1,f_2)$ to 0 by increasing $k$, the number of models being ensembled. The idea is to compare both $f_1$ and $f_2$ to the model $f^*$ that is the solution to linear regression on the union of the two feature spaces $G \cup G'$. This model is only more accurate than the  anchor model $\bar f$, as $\bar f$ is  a (likely suboptimal) function  within the  same span as $f^*$. Moreover, since the base models underlying both $f_1$ and $f_2$ were sampled i.i.d.~from a common distribution $Q$, the set of features in $G \cup G'$ is exchangeable. Consequently, we can view both $f_1$ and $f_2$ as consisting of the solution to a linear regression problem on a uniformly random subset of half the features available to $f^*$. This allows us to argue that as $k$ gets large, the MSE of $f_1$ and $f_2$ must approach the MSE of $f^*$, which in turn lets us drive $D(f_1,f_2)$ to 0 as a function of $k$. Taking the expectation over the models $f_1$ and $f_2$ as well lets us simplify the bound to:
$$\E_{f_1,f_2}[D(f_1,f_2)] \leq 4(\bar R_k - \bar R_{2k})$$
where $\bar R_k$ and  $\bar R_{2k}$ represent the expected MSE that result from stacking $k$ and $2k$ models respectively, drawn i.i.d.~from $Q$. This relates the stability of $f_1$ and $f_2$ to the local training curve; since $\bar R_1, \bar R_2, \ldots$ is a monotonically decreasing sequence bounded from above by $\E[y^2]$ and from below by $0$, the curve must quickly ``level out'' at most values, yielding any degree of desired stability.

We briefly remark on the ``local training curve'' form of this result, which is also shared by our results for neural networks and regression trees.
Rather than bounding disagreement directly in terms of the
number of models $k$ being aggregated (which is the form of our result for gradient boosting),
here we are relating disagreement  to the local training error curve as a function of $k$ --- i.e., how much the error would decrease in expectation by doubling the number of models in the aggregation from $k$ to $2k$. If 
this curve has flattened out sufficiently near any particular value of $k$, we have approximate agreement.

Note that in general we cannot say anything about which value of $k$ will result in the loss $\bar R_k$ approximating its minimum value. Consider a distribution $Q$ over some model space $H$ in which almost all models have large error and no correlation with
respect to the true $y$ values, and there is one model $h^*$ that is a perfect predictor of $y$. Let $Q$ put some (arbitrarily small) weight $\tau > 0$ on $h^*$. Then stacked regression for $k \ll 1/\tau$ will result in  large error, since with high probability we sample only uninformative models. But for $k \sim 1/\tau$ we will be likely to draw $h^*$,
at which point stacking will suddenly choose to put all its weight on $h^*$ and enjoy a rapid drop in error.

However, note that in this example, the learning curve as a function of $k$ {\em will} have
been flat for the long period before the error drop (as well as after it), and so our theorem implies high agreement even for small values of $k$ despite low error only being obtained for large values of $k$. More generally, if labels $y$ are bounded (say) in $[0,1]$, then each term $\bar R_k$ is bounded in $[0,1]$. Because the sequence $\bar R_k$ is monotonically decreasing in $k$, there can be at most $1/\alpha$ values of $k$ such that $(\bar R_k - \bar R_{2k})$ drops by at least $\alpha$ before contradicting the non-negativity of squared error. Thus even in the worst case, there must be a value of $k \leq 2^{1/\alpha}$ such that $(\bar R_k - \bar R_{2k}) \leq \alpha$ --- a bound depending only on the desired agreement rate $\alpha$, independently of the complexity of the instance or the model class.  Of course we expect a better behaved learning curve in practice. Moreover, it is easy to empirically evaluate the actual learning curve on a holdout set. 

This suggests a practical prescription arising from our local-learning curve bound for stacking (as well as the similar bound we obtain for neural network and regression tree training): empirically trace out the
learning curve by successively doubling $k$,  estimate the errors of the stacked models on a 
holdout set, and choose a value of $k$ for which the local drop in error is small. Note that whatever our computational resources might be, reducing our predictive error while respecting our resource constraints 
and achieving predictive stability are aligned: in neither case do we want to choose a value $k$ for which the local learning curve is steep. For predictive error, steepness of the learning curve indicates that at only modestly increased cost, we can meaningfully reduce error. Conversely, flatness of the learning curve indicates local optimality of our choice of $k$ --- we cannot improve error substantially at least without significantly more computational and data resources. Our theorem shows that the same condition implies strong independent agreement bounds. 

\subsubsection{Gradient Boosting}
Our next application in Section \ref{sec:gboosting} is to gradient boosting \citep{friedman2000additive,friedman2001greedy,mason1999boosting}. Concretely, gradient boosting starts with an arbitrary class of ``weak'' models $C$ (e.g., depth 5 regression trees), and iteratively builds up a model $f$ by finding a model $g \in C$ that has high correlation with the residuals of the existing model $f$. It  then adds some scaling of $g$ to $f$, and continues to the next iterate.  Scalable implementations of gradient boosting, like XGBoost \citep{chen2016xgboost}, have become some of the most widely used learning algorithms for tabular data. Like stacking, gradient boosting builds up a linear ensemble of base models, but unlike stacking, the models are no longer independently trained. We again take a reduced-form view of training on finite samples, mirroring the classical Statistical Query model of \cite{kearns1998efficient}: we model the learning algorithm as having access to a weak-learning oracle that, given a model $f$, can return \emph{any} model $g \in C$ whose covariance with the residuals of $f$ is within $\varepsilon$ of maximal on the underlying distribution, modeling both sampling and optimization error. Gradient boosting has two properties that are useful to us: it produces a model in the linear span of $C$, and it isn't hard to show that (independent of any distributional assumptions), it learns a model whose MSE approaches that of the \emph{best} model in the span of $C$ at a rate of $1/k$, where $k$ is the number of iterates. Accordingly, we choose to compare the error of our models to the hypothetical model $f^*$ that minimizes squared error amongst all models in the linear span of $C$. Once  again, as $\bar f$ also  lies  within the linear  span of  $C$, $f^*$ has only lower error. Because we can show that $\textrm{MSE}(f_1), \textrm{MSE}(f_2) \leq \textrm{MSE}(f^*) + O((\tau^*)^2/k)$ after $k$ iterates, our core analysis establishes that $D(f_1,f_2) \leq O((\tau^*)^2/k)$ as desired. Here $\tau^*$ is the norm of the MSE-optimal predictor in the span of $C$, a problem-dependent constant depending on only the underlying data distribution and $C$. We later show how to remove this dependence on $\tau^*$ by instead using a Frank-Wolfe style algorithm which controls the norm of the models $f_1,f_2$ that we learn, and hence lets us instead anchor to the optimal bounded norm model in the span of $C$.

\subsubsection{Neural Network and Regression Tree Training}
Our final applications in Section \ref{sec:composition} are to neural network training with architecture search and regression tree training. For these applications, the anchor model $\bar f$ does not necessarily lie within the same model class as $f_1$ and $f_2$ --- i.e the average of two neural networks of size $n$ is not in general itself representable as a neural network of size $n$, and the average of two regression trees of depth $d$ is not in general representable as another regression tree of depth $d$. However, the average of two neural networks of size $n $ \emph{is} representable as a neural network of size $2n$, and the average of two depth-$d$ regression trees is representable as a depth $2d$ regression tree.  Just as in our stacking result, these bounds relate the disagreement of two approximately optimal models $f_1$ and $f_2$ to the local learning curve (parameterized by the number of internal nodes for neural networks, and the depth for regression trees), which means that disagreement can be driven to $0$ as a function of the model complexity at a rate that depends only on the desired disagreement level and is independent of the complexity of the instance.

\subsubsection{Tightness of Our Results}
We show that in general, our technique yields tight bounds. Concretely, we show in Section \ref{sec:stackinglb} that the stability bound that our technique yields is tight even in constants. Recall that our upper bound for stacking was:
$$\E_{f_1,f_2}[D(f_1,f_2)] \leq 4(\bar R_k - \bar R_{2k})$$
We show that for every $\varepsilon\geq 0$ there is an instance for which:
$$\E_{f_1,f_2}[D(f_1,f_2)] \geq (4-\varepsilon)(\bar R_k - \bar R_{2k})$$
This establishes that our core average anchoring technique cannot be generically improved even in the constant factor.

\subsubsection{Generalizations}

Finally, in Section \ref{sec:generalize} we generalize all of our results beyond one-dimensional outcomes and squared loss to multi-dimensional strongly convex losses.  This generalization requires establishing an analogue of our MSE decomposition and anchoring argument, letting us relate disagreement rates to differences in loss with the averaged anchor model. 

We also give a variant of our gradient boosting result for a Frank-Wolfe style optimization algorithm that iteratively builds up a linear combination of weak learners from $C$ that are restricted to have norm at most $\tau$ for any $\tau$ of our choosing. This lets us anchor to the best norm-$\tau$ model in the span of $C$, which lets us drive the disagreement error between two independently trained models to $0$ at a rate of $O(\tau^2/k)$ where $k$ is the number of iterations of the algorithm. Unlike our initial gradient boosting result --- which has error going to $0$ at a rate of $O((\tau^*)^2/k)$, where $\tau^*$ is a problem-dependent constant not under our control --- here $\tau$ is a parameter of our algorithm and we can set it however we like to trade off agreement with accuracy. 

\subsection{Interpreting Local Learning Curve Stability}
Our results for stacking, neural network training, and regression tree training all have the form of local learning curve stability bounds: $D(f_1,f_2) \leq 4(R(\cF_n)-R(\cF_{2n}))$---
where $R(\cF_k)$ refers to the optimal error amongst models with ``complexity'' $k$ (parameterizing the number of models being ensembled, network size, and depth in the cases of stacking, neural networks, and regression trees respectively). These kinds of bounds are actionable and well aligned with optimizing for accuracy. They are actionable because (with enough data) it is possible to empirically plot the local learning curve by training with different parameter values, and picking $n$ such that the curve is locally flat --- $R(\cF_n) \approx R(\cF_{2n})$. This is aligned with the goal of optimizing for accuracy since if we could substantially improve accuracy by locally increasing the complexity of the model, then in the high data regime, we should. It is also descriptive in the sense that if we assume that most deployed models are not ``leaving money on the table'' in the sense of being able to substantially improve accuracy by locally increasing complexity, then we should expect stability amongst deployed models. Because the error sequence $\{R(\cF_n)\}_n$ is bounded from above and below and monotonically decreasing in $n$, the local learning curve is also guaranteed to ``flatten out'' to a value $\alpha$ for a value of $n$ that is independent of the problem complexity, and at most $2^{1/\alpha}$. 
However, in practice we expect the local learning curve to flatten out even more gracefully. Empirical studies of neural scaling laws \citep{kaplan2020scaling,hoffmann2022training} have consistently found that across a wide variety of domains, the optimal error $R(\mathcal{F}_n)$ decreases as a power law in model complexity: $R(\mathcal{F}_n) \approx R^* + cn^{-\gamma}$ for some constants $c > 0$, $\gamma > 0$, and irreducible error $R^*$. Under such a power law, the \emph{gap} in the local learning curve becomes: $R(\mathcal{F}_n) - R(\mathcal{F}_{2n}) = c\left(n^{-\gamma} - (2n)^{-\gamma}\right) = c(1 - 2^{-\gamma})n^{-\gamma} = O(n^{-\gamma})$. That is, the local learning curve gap shrinks polynomially in model complexity, which by our results implies that independent model disagreement $D(f_1, f_2)$ decreases at the same rate. Crucially, this does not require low absolute error rather only that the marginal benefit of increasing complexity diminishes. The exponent $\gamma$ varies by domain (typically $0.05$--$0.5$ for large-scale neural networks), but is reliably positive. Our results provide theoretical grounding for empirical observations that larger models exhibit greater prediction-level consistency across independent training runs \citep{bhojanapalli2021reproducibility,jordan2024on}, and may help explain the surprisingly high levels of agreement observed empirically across independently trained large language models \citep{goreckimonoculture}.

\subsection{Additional Related Work}
\label{subsec:related_work}
\paragraph{Agreement via Interaction}
 A line of work inspired by \cite{aumann1976agreeing} aims to give interactive test-time protocols through which two models (trained initially on different observations) can arrive at (accuracy improving) agreement. Initial work in economics \citep{geanakoplos1982we} focused on exact agreement, but more recent work in computer science focused on interactions of bounded length, leading to approximate agreement of the same form that we study here \citep{aaronson2005complexity,frongillo2023agreement}. This line of work focused on perfect Bayesian learners until \cite{collina2025tractable,collina2026collaborative,kearns2026networked} showed that the same kind of accuracy-improving agreement could be obtained via test-time interaction using computationally and data efficient learning algorithms.

\paragraph{Agreement as Variance} Our disagreement metric is (twice) the variance of the training procedure. \cite{kur2023on} show that for realizable learning problems (with mean zero independent noise), empirical risk minimization over a fixed, convex class leads to variance that is upper bounded by the minimax rate. Our results apply to more general settings: our applications to neural networks and regression trees correspond to non-convex learning problems, our application to stacking does not correspond to optimization over a fixed class, and we do not require any realizability assumptions. Note also, the interest of \cite{kur2023on} is to study generalization through the lens of bias/variance tradeoffs, whereas our starting point is to assume small excess risk in distribution.

\paragraph{Different Notions of Stability}
There are many notions of stability in machine learning. \cite{bousquet2002stability} give notions of leave-one-out stability and connect them to out-of-sample generalization. These notions have been influential, and many authors have proven generalization bounds via this link to stability: for example \cite{hardt2016train} show that stochastic gradient descent is stable in this sense if only run for a small number of iterations, and \cite{charles2018stability} study the stability of global optimizers in terms of the geometry of the loss-optimal solution. These notions of stability are different than the disagreement metric we study here. First, stability in the sense of \cite{bousquet2002stability} is stability only of the loss, not the predictions themselves which is our interest. Second, stability in the sense of \cite{bousquet2002stability} is stability with respect to adding or removing a single training example, whereas we want prediction-level stability over fully independent retraining, in which (in general) every single training example is different --- just drawn from the same distribution. 

Differential privacy \citep{dwork2006calibrating,dwork2014algorithmic} is a strong notion of algorithmic stability that when applied to machine learning requires that when one training sample is changed, the (randomized) training algorithm induces a near-by distribution on output models. Differential privacy is a much stronger stability condition than those of \cite{bousquet2002stability}, and similarly implies strong generalization guarantees \citep{dwork2015preserving}. When the differential privacy stability parameter is taken to be sufficiently small ($\varepsilon \ll 1/\sqrt{n}$), then it implies stability under resampling of the entire training set from the same distribution, as we study in our paper --- this is related to what is called \emph{perfect generalization} by \cite{cummings2016adaptive}. Via this connection, differential privacy has been shown to be (information theoretically) reducible to replicability (as defined by \cite{impagliazzo2022reproducibility}) and vice-versa \citep{bun2023stability}. Replicability is a stronger condition than the kind of agreement that we study: in the context of machine learning, it requires that (under coupled random coins across the two training algorithms), the run of two training algorithms over independently sampled training sets output \emph{exactly identical} models with high probability. In contrast we ask that two independently trained models produce \emph{numerically similar} predictions on \emph{most} examples. However, because replicability asks for more, it also comes with severe limitations that we avoid. Via its connection to differential privacy, there are strong separations between problems that are learnable with the constraint of replicability and without \citep{alon2019private,bun2020equivalence}. Even for those learning problems that are solvable replicably (e.g. learning problems solvable in the statistical query model of \cite{kearns1998efficient}), {\em standard\/} learning algorithms for these problems are not replicable, and the computational and sample complexity of custom-designed replicable algorithms often far  exceeds the complexity of non-replicable learning (see e.g. \cite{eaton2025replicable}). In contrast, our analyses apply to existing, popular, state of the art learning algorithms (gradient boosting and regression tree and neural network training with architecture search). Since \emph{any} model class can be used together with stacking or gradient boosting, there are no barriers to obtaining our kind of model agreement similar to the information theoretic barriers separating replicable from non-replicable learning. The concurrent work of \cite{hopkins2025approximate} is similarly motivated to ours: their goal is to relax the strict replicability definition of \cite{impagliazzo2022reproducibility} to one that requires that two replicably trained models agree on ``most inputs'', and thereby circumvent the impossibility results separating PAC learning from replicable learning. They give several definitions of approximate replicability and show  that approximately replicable PAC learning has similar sample complexity to unconstrained PAC learning. Our approaches, results, and techniques are quite different, however. \cite{hopkins2025approximate} focuses on binary hypothesis classes and gives custom training procedures relying on shared randomness that satisfy their notion of approximate replicability. We instead focus on (multi-dimensional) regression problems and give analyses of existing, popular learning algorithms. Our training procedures do not use shared randomness.

\paragraph{Agreement and Ensembling} \cite{wood2023unified} studies the error reduction that can be obtained through ensembling methods and relates it to a notion of model disagreement that is equivalent to ours. Their interests are  dual to ours: for them the  goal is error reduction through explicit ensembling, and model disagreement is a means to that end; our primary goal is model agreement, and we show a general recipe for obtaining it --- for us, the ``ensemble'' is a hypothetical object used only in the analysis of agreement for hypothesis classes (neural networks, regression trees) that are not themselves ensemble methods.

\paragraph{Empirical Phenomena} 
Empirical work quantifies prediction-level stability across retrainings via churn, per-example consistency, and related notions~\citep{bhojanapalli2021reproducibility, bahri2021locally, johnson2023inconsistency}. Based on this, various studies show that simple procedures --- e.g., ensembling or co-distillation --- can increase agreement~\citep{wang2020wisdom, bhojanapalli2021reproducibility}.
However, recent work showed that fluctuations in run-to-run test accuracy can be largely explained by finite-sample effects even when the underlying predictors are similar~\citep{jordan2024on}.
Relatedly, \citet{somepalli2022can} made the observation that across pairs of models, independently trained neural networks often seem to depict similar decision regions despite their complexity which raises the question of when and whether external methods to encourage  agreement are even required. 
On top of this, \citet{mao2024the} provide evidence that training trajectories lie on a shared low-dimensional manifold in prediction space, pointing to a common structure that could underlie agreement. 
The latter works only characterize the prediction space based on visualizations and do not provide a formal explanation as to why agreement might occur from independent training. \cite{goreckimonoculture} recently conducted a large empirical study of model disagreement across 50 large language models used for prediction tasks, and find that empirically they have much higher levels of agreement than one would expect if errors were made at random; our work can be viewed as giving foundations to this kind of empirical observation. 

Empirical agreement has also been studied through the lens of generalization. In-distribution pairwise disagreement between independently trained copies on unlabeled test data has been observed to provide an accurate estimate of test error~\citep{jiang2022assessing}. 
Moreover, a single model’s pattern of predictions on the training set closely matches its behavior on the test set as distributions, indicating prediction-space stability that is distinct from inter-run agreement~\citep{nakkiran2020distributional}. 
Beyond in-distribution, there are cases where even out-of-distribution pairwise agreement scales linearly with in-distribution agreement across many shifts~\citep{baek2022agreement}.
None of these works provide prediction-space conditions or rates under which independently trained models will immediately agree in the first place.

A complementary line of work focusing on weight-space studies shows  that many independently trained solutions can be connected by low-loss paths \citep{garipov2018loss, draxler2018}. 
Even when solutions aren’t trivially aligned, applying neuron permutations can align them, enabling low-loss interpolation~\citep{entezari2022the, ainsworth2023git}. 
It can be shown that their layers are stitchable or exhibit layer-wise linear feature connectivity \citep{bansal2021, zhou2023going}, which is consistent with a connected region once permutation symmetries are accounted for. 
These techniques are post-hoc observations about weight or parameter space and do not provide ex ante, prediction-space guarantees or quantitative rates that independent training will agree without alignment. 

Closer to prediction space theory, the neural tangent kernel findings characterize how a model’s predictive function evolves under gradient descent \citep{jacot2018neural, lee2019wide}. However, these analyses focus on a single training trajectory, primarily analyze the infinite-width regime, and do not directly address whether independently trained models will agree. Our work seeks conditions under which standard training \emph{directly} yields approximate agreement “out of the box,” bypassing parameter-space alignment and establishing stability in prediction space itself.

\section{Preliminaries and Midpoint Anchoring Lemmas}
\label{sec:prelims}
We consider a setting in which we train two models on independently drawn datasets. Let $\cX \subseteq \mathbb{R}^d$ be the data domain and $\cY \subseteq \mathbb{R}$ be the label domain. We assume access to datasets $S = ((x_i, y_i))_{i=0}^{n-1}$   
that are independently drawn from a joint distribution $P$ on $\cX \times \cY$. 
Note that unless otherwise stated, all expectations will be with respect to $x,y \sim P$ or
where appropriate just the marginal over $x$.
A model is then defined as a function mapping $f: \cX \mapsto \cY$. We define the norm \(\|f\| := \big(\E[f(x)^2]\big)^{1/2}\). 
With this we define the mean squared error objective and the corresponding population risk
\begin{equation*}
    \mathrm{MSE}(f) = \E[(y - f(x))^2], \quad R(\mathcal F):=\inf_{f\in\mathcal F}\mathrm{MSE}(f).
\end{equation*}

We next define the disagreement between two models as their expected squared difference.
\begin{definition}[Disagreement]
    For any two functions $f_1: \cX \mapsto \cY, f_2: \cX \mapsto \cY $,
    we define the expected disagreement between them as 
    \begin{equation*}
        D(f_1,f_2):=\E[(f_1(x)-f_2(x))^2] \enspace .
    \end{equation*}
\end{definition}

We are now ready to state and prove a simple identity that will form the backbone of our analyses. It relates the \emph{disagreement} between two models to the degree to which their errors could be improved by averaging the models. 

\begin{lemma}[Midpoint identity for squared loss]
\label{lem:midpoint_identity}
For any two functions $f_1: \cX \mapsto \cY$ and $f_2: \cX \mapsto \cY$, let $\bar f(x):=\tfrac{1}{2}(f_1(x)+f_2(x))$. Then
\[
  D(f_1,f_2)
  \,=\,
  2\Big(\mathrm{MSE}(f_1) + \mathrm{MSE}(f_2) - 2\,\mathrm{MSE}(\bar f)\Big).
\]
\end{lemma}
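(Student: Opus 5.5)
The identity will follow from a pointwise algebraic computation, which we then integrate against $P$. For a fixed $(x,y)$, write $a := y - f_1(x)$ and $b := y - f_2(x)$. Then the residual of the midpoint model is
\[
y - \bar f(x) = \tfrac{1}{2}(a+b),
\]
and the prediction gap is
\[
f_1(x) - f_2(x) = b - a.
\]
The plan is to express everything in terms of $a$ and $b$ and check a polynomial identity.

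The key step is the parallelogram law in $\R$:
\[
(a+b)^2 + (a-b)^2 = 2a^2 + 2b^2.
\]
Dividing by $4$ gives
\[
\big(\tfrac{a+b}{2}\big)^2 = \tfrac{a^2+b^2}{2} - \tfrac{(a-b)^2}{4}.
\]
Substituting back, this reads pointwise as
\[
(y-\bar f(x))^2 = \tfrac{1}{2}\big[(y-f_1(x))^2 + (y-f_2(x))^2\big] - \tfrac{1}{4}(f_1(x)-f_2(x))^2.
\]

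Next, take expectations over $(x,y)\sim P$ and use linearity of expectation. This yields
\[
\mathrm{MSE}(\bar f) = \tfrac{1}{2}\big(\mathrm{MSE}(f_1)+\mathrm{MSE}(f_2)\big) - \tfrac{1}{4}D(f_1,f_2).
\]
Rearranging by multiplying by $4$ and solving for $D(f_1,f_2)$ gives exactly the claimed form
\[
D(f_1,f_2) = 2\big(\mathrm{MSE}(f_1)+\mathrm{MSE}(f_2)-2\,\mathrm{MSE}(\bar f)\big).
\]

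There is no real obstacle here. The only point needing care is integrability, since linearity of expectation is applied to possibly infinite quantities. If $\mathrm{MSE}(f_1)$ and $\mathrm{MSE}(f_2)$ are finite, then $D(f_1,f_2)$ and $\mathrm{MSE}(\bar f)$ are finite as well, because $(u+v)^2 \le 2u^2 + 2v^2$. Finite mean squared errors are implicitly assumed throughout the paper, so we will simply state this assumption.
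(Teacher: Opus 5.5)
Your proof is correct and is essentially the paper's argument: the paper expands $\E[(r_1-r_2)^2]$ and $\E[(\tfrac{1}{2}(r_1+r_2))^2]$ in terms of the residuals and eliminates the cross term $\E[r_1r_2]$, which is exactly your pointwise parallelogram identity followed by taking expectations. Your remark that finite $\mathrm{MSE}(f_1),\mathrm{MSE}(f_2)$ make every term finite is a sensible detail that the paper leaves implicit.
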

\begin{proof}
Let $r_i(x):=f_i(x)-y$ for $i\in\{1,2\}$. Then $\bar f(x)-y=\tfrac{1}{2}(r_1(x)+r_2(x))$ and $f_1(x)-f_2(x)=r_1(x)-r_2(x)$. Expanding squares and using linearity of expectation gives
\begin{align*}
\E[(r_1-r_2)^2]
&= \E[r_1^2] + \E[r_2^2] - 2\E[r_1 r_2].
\end{align*}
On the other hand,
\begin{align*}
\E\Big[\Big(\tfrac{1}{2}(r_1+r_2)\Big)^2\Big]
&= \tfrac{1}{4}\E[(r_1+r_2)^2]
 = \tfrac{1}{4}\E[r_1^2+r_2^2+2r_1r_2]
 = \tfrac{1}{4}\E[r_1^2]+\tfrac{1}{4}\E[r_2^2]+\tfrac{1}{2}\E[r_1r_2].
\end{align*}
Therefore,
\begin{align*}
2\Big(\E[r_1^2]+\E[r_2^2]-2\E\big[\big(\tfrac{1}{2}(r_1+r_2)\big)^2\big]\Big)
&= 2\Big(\E[r_1^2]+\E[r_2^2]-2\Big(\tfrac{1}{4}\E[r_1^2]+\tfrac{1}{4}\E[r_2^2]+\tfrac{1}{2}\E[r_1r_2]\Big)\Big)\\
&= 2\big(\tfrac{1}{2}\E[r_1^2]+\tfrac{1}{2}\E[r_2^2]-\E[r_1r_2]\big)\\
&= \E[r_1^2]+\E[r_2^2]-2\E[r_1r_2]
 = \E[(r_1-r_2)^2].
\end{align*}
Substituting back $\E[r_i^2]=\mathrm{MSE}(f_i)$ and $\E\big[\big(\tfrac{1}{2}(r_1+r_2)\big)^2\big]=\mathrm{MSE}(\bar f)$ yields the claim.
\end{proof}

A useful corollary of this identity is that we can upper bound the disagreement between two models by the degree to which they are sub-optimal relative to the \emph{best model} in any family that contains their average. 

\begin{corollary}[Disagreement via the midpoint anchor]
\label{lem:midpoint_anchor}
For any two functions $f_1,f_2:\cX\to\cY$, let $\bar f(x):=\tfrac{1}{2}(f_1(x)+f_2(x))$. If $\bar f\in\mathcal{H}$ for some class of predictors $\mathcal{H}$, then
\[
  D(f_1,f_2)
  \,\le\,
  2\big(\mathrm{MSE}(f_1)-R(\mathcal{H})\big)
  +2\big(\mathrm{MSE}(f_2)-R(\mathcal{H})\big).
\]
\end{corollary}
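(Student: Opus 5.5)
This follows directly from the midpoint identity (Lemma~\ref{lem:midpoint_identity}) plus the observation that $R(\mathcal{H})$ is an infimum over a class containing $\bar f$. I will start from the exact identity
\[
  D(f_1,f_2) \,=\, 2\Big(\mathrm{MSE}(f_1)+\mathrm{MSE}(f_2)-2\,\mathrm{MSE}(\bar f)\Big),
\]
which holds for arbitrary $f_1,f_2$ with no assumptions.

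Next I use the hypothesis $\bar f\in\mathcal{H}$. Since $R(\mathcal{H})=\inf_{f\in\mathcal{H}}\mathrm{MSE}(f)$, membership gives $\mathrm{MSE}(\bar f)\ge R(\mathcal{H})$, and hence $-2\,\mathrm{MSE}(\bar f)\le -2R(\mathcal{H})$. Substituting this into the identity, where the term enters with a negative coefficient so the inequality points the right way, yields
\[
  D(f_1,f_2)\,\le\, 2\big(\mathrm{MSE}(f_1)+\mathrm{MSE}(f_2)-2R(\mathcal{H})\big).
\]
Splitting $2R(\mathcal{H})$ as $R(\mathcal{H})+R(\mathcal{H})$ and distributing the factor of $2$ gives exactly the claimed form $2(\mathrm{MSE}(f_1)-R(\mathcal{H}))+2(\mathrm{MSE}(f_2)-R(\mathcal{H}))$.

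There is essentially no obstacle. The only points to handle are minor technical ones. First, the infimum need not be attained, but that does not matter because only the inequality $\mathrm{MSE}(\bar f)\ge R(\mathcal{H})$ is used. Second, the MSE quantities must be finite for the algebra to make sense. If $\mathrm{MSE}(f_1)$ or $\mathrm{MSE}(f_2)$ is infinite, the right-hand side is $+\infty$ and the bound holds trivially, since $R(\mathcal{H})\le\mathrm{MSE}(\bar f)<\infty$ whenever both errors are finite. I will note this case briefly and otherwise assume finiteness, as the identity already implicitly does.
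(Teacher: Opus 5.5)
Your argument is correct and is exactly the paper's proof: start from Lemma~\ref{lem:midpoint_identity}, use $\bar f\in\mathcal{H}$ to get $\mathrm{MSE}(\bar f)\ge R(\mathcal{H})$, and substitute into the negatively weighted term. Your finiteness aside is unnecessary, since the paper implicitly assumes finite errors, and it is slightly off as written: if $\mathrm{MSE}(f_1)=\infty$ and $\mathrm{MSE}(f_2)<\infty$, then $\mathrm{MSE}(\bar f)=\infty$, so $R(\mathcal{H})$ may itself be $\infty$ and the right-hand side is not a well-defined $+\infty$; simply assuming finite MSEs is cleaner.
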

\begin{proof}
By Lemma~\ref{lem:midpoint_identity}, we have
\[
  D(f_1,f_2)
  \,=\,
  2\Big(\mathrm{MSE}(f_1)+\mathrm{MSE}(f_2)-2\,\mathrm{MSE}(\bar f)\Big).
\]
If $\bar f\in\mathcal{H}$ then $\mathrm{MSE}(\bar f)\ge R(\mathcal{H})$, so substituting yields the claim.
\end{proof}

If the model class from which $f_1$ and $f_2$ were trained contains their average, then we can relate the disagreement between $f_1$ and $f_2$ to the sub-optimality of the loss of $f_1$ and $f_2$ to the global optimum within the class in which they were trained. However, non-convex model classes will  not satisfy this closure-under-averaging property. To analyze these classes it is useful to consider local learning-curve bounds with respect to a hierarchy of model classes, such that each level $\mathcal{F}_{2n}$ in the hierarchy is expressive enough to represent the average of any pair of models in $\mathcal{F}_{n}$. We will see that this property is satisfied by neural networks (where $n$ parametrizes the number of internal nodes) and regression trees (where $n$ parametrizes the depth). 

\begin{lemma}[Local learning-curve bound from midpoint closure]
\label{lem:midpoint_local_curve}
Let $(\mathcal{F}_n)_{n\ge 1}$ be a nested sequence of predictor classes and assume that for every $n$ and every $f_1,f_2\in\mathcal{F}_n$, the midpoint predictor $\bar f:=\tfrac{1}{2}(f_1+f_2)$ lies in $\mathcal{F}_{2n}$. Fix $n\ge 1$ and suppose $f_1,f_2\in\mathcal{F}_n$ satisfy $\mathrm{MSE}(f_i)\le R(\mathcal{F}_n)+\varepsilon$ for $i\in\{1,2\}$. Then
\[
  D(f_1,f_2)
  \,\le\,
  4\big(R(\mathcal{F}_n)-R(\mathcal{F}_{2n})+\varepsilon\big).
\]
\end{lemma}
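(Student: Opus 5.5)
The plan is to apply Corollary~\ref{lem:midpoint_anchor} directly, taking $\mathcal{H} := \mathcal{F}_{2n}$. By the midpoint-closure hypothesis, since $f_1, f_2 \in \mathcal{F}_n$, their average $\bar f = \tfrac{1}{2}(f_1+f_2)$ lies in $\mathcal{F}_{2n}$. The corollary then gives
\[
  D(f_1,f_2) \le 2\big(\mathrm{MSE}(f_1)-R(\mathcal{F}_{2n})\big) + 2\big(\mathrm{MSE}(f_2)-R(\mathcal{F}_{2n})\big).
\]

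Next we substitute the approximate optimality assumption. Using $\mathrm{MSE}(f_i)\le R(\mathcal{F}_n)+\varepsilon$ for $i\in\{1,2\}$, each bracket is at most $R(\mathcal{F}_n)-R(\mathcal{F}_{2n})+\varepsilon$. Summing the two terms with their factor of $2$ yields $4\big(R(\mathcal{F}_n)-R(\mathcal{F}_{2n})+\varepsilon\big)$, which is the claim.

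There is no substantive obstacle. The one point to check is that the corollary requires only $\mathrm{MSE}(\bar f)\ge R(\mathcal{H})$, and this holds because $R$ is an infimum over the class. In particular, neither attainment of the infimum nor $R(\mathcal{F}_{2n}) \le R(\mathcal{F}_n)$ is needed for the inequality itself. Nestedness only serves to make the right-hand side nonnegative and hence meaningful as a learning-curve gap.
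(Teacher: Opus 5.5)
Your proposal is correct and is the paper's proof: use midpoint closure to place $\bar f$ in $\mathcal{F}_{2n}$, apply Corollary~\ref{lem:midpoint_anchor} with $\mathcal{H}=\mathcal{F}_{2n}$, then substitute $\mathrm{MSE}(f_i)\le R(\mathcal{F}_n)+\varepsilon$ into both terms. Your side remark is also accurate: the inequality needs only $\mathrm{MSE}(\bar f)\ge R(\mathcal{F}_{2n})$ and does not use nestedness.
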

\begin{proof}
By midpoint closure we have $\bar f\in\mathcal{F}_{2n}$, so Lemma~\ref{lem:midpoint_anchor} with $\mathcal{H}=\mathcal{F}_{2n}$ gives
\[
  D(f_1,f_2)
  \le 2\big(\mathrm{MSE}(f_1)-R(\mathcal{F}_{2n})\big)+2\big(\mathrm{MSE}(f_2)-R(\mathcal{F}_{2n})\big).
\]
Using $\mathrm{MSE}(f_i)\le R(\mathcal{F}_n)+\varepsilon$ for both $i$ yields the claim.
\end{proof}

In the following sections, we apply Lemma~\ref{lem:midpoint_anchor} and Lemma~\ref{lem:midpoint_local_curve} by verifying that the midpoint predictor lies in an appropriate hypothesis class.
\section{Warmup Application: Stacking}
\label{sec:stacking}

Stacking is  an ensembling method which first trains $k$ independent base models in some arbitrary fashion and then uses linear regression over these base models to combine their predictions.

\begin{algorithm}
\caption{Ensembling via Stacking}\label{alg:stacking}
\SetKwInput{Input}{Input} 

\Input{$M: G \rightarrow \mathcal{H}$ black-box learning algorithm, $D \sim P^n$ dataset of size $n$, number of shards $k$}
\BlankLine 
Randomly split $D$ into $k$ disjoint shards $G_i$ each of size $|G_i| = \Big\lfloor \frac{n}{k} \Big\rfloor$\;

\For{$i \in [k]$}{
    $g_i \gets M(G_i)$\;
}
$f \gets \mathrm{OLS}(g_1,\dots,g_k)$\; 
\Return $f$\;
\end{algorithm}
Let $Q$ be a probability distribution on models of the form $g: \cX \rightarrow \mathbb{R}$. Concretely, $Q$ could represent the law of a base predictor obtained by training a fixed learning algorithm $M$ on a \emph{random shard} of the training sample of size $n/k$, with a fresh i.i.d. draw of examples and fresh algorithmic randomness; independent draws from $Q$ correspond to training $M$ on independent shards. We remark in passing that other interpretations of $Q$ also make sense. For example, perhaps all parties share the same training set (because e.g. it is the training set for a standard benchmark dataset like ImageNet). Then there is no need to have different models be trained on different shards, and $Q$ can represent only the randomness of the training procedure, which might re-use samples in arbitrary ways. We will analyze the population least squares predictor over the span of these base models. That is, we sample $k$ models $G=\{g_1,...,g_k\} \sim Q^k$ and define $V(G)$ to be the linear span of the sampled models in $G$. We will consider the predictor \[\arg \min_{f \in V(G)} \mathrm{MSE}(f).\] Note that this is just a finite dimensional least squares problem, so a minimizer exists, and multiset multiplicities do not affect the span $V(G)$. For $t\in\mathbb{N}$, let $R_t$ denote the random variable $R(G)$ when $G=\{g_1,\dots,g_t\}$ with $g_1,\dots,g_t\stackrel{\text{i.i.d.}}{\sim}Q$, and write $\bar R_t:=\E_{\{g_1,...,g_t\}\sim Q^t}[R_t]$. We will use the shorthand $\bar R_t:= \E_G[R_t]$ 

\subsection{An Agreement Upper Bound}
We instantiate our agreement upper bound for Stacking using the midpoint anchoring lemma. In this case we compare $f_1$ and $f_2$ to the risk $R(G^*)$ where $G^*:=G\cup G'$ is the union of the base models used in training $f_1$ and $f_2$. Here $f_1$ is the MSE minimizer over the set of base models $G=\{g_1,...,g_k\}$ and $f_2$ is the MSE minimizer over the set of base models $G'=\{g_1',...,g_k'\}$.  We know that $V(G),V(G') \subseteq V(G \cup G')$, and that the midpoint predictor $\tfrac{1}{2}(f_1+f_2)$ lies in $V(G\cup G')$. This, together with the fact that the set of $2k$ models in $G \cup G'$ is exchangeable lets us prove the following agreement bound:
\begin{theorem}[Agreement for Stacked Aggregation]\label{thm:main_stack}
Let $G=\{g_1,\dots,g_k\}\stackrel{\text{i.i.d.}}{\sim}Q^k$ and $G'= \{g'_1,\dots,g'_k\}\stackrel{\text{i.i.d.}}{\sim}Q^k$ be independent.
Define $f_1,f_2$ as follows: 
\[f_1 = \arg \min_{f \in V(G)} \mathrm{MSE}(f), \quad  f_2 = \arg \min_{f \in V(G')} \mathrm{MSE}(f)\]
Then we have that 

\[
\E_{f_1,f_2}\big[D(f_1,f_2)] \;\le\; 4\big(\bar R_k - \bar R_{2k}\big).
\] 

\end{theorem}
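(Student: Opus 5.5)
The plan is to apply Corollary~\ref{lem:midpoint_anchor} pointwise, for each realization of $(G,G')$, with the anchor class $\mathcal{H}=V(G\cup G')$, and then take expectations, using exchangeability to identify the expected risks.

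Fix $G,G'$ and write $G^*:=G\cup G'$. Since $f_1\in V(G)\subseteq V(G^*)$ and $f_2\in V(G')\subseteq V(G^*)$, and $V(G^*)$ is a linear subspace, the midpoint $\bar f=\tfrac12(f_1+f_2)$ lies in $V(G^*)$. Corollary~\ref{lem:midpoint_anchor} then gives
\[
D(f_1,f_2)\le 2\big(\mathrm{MSE}(f_1)-R(G^*)\big)+2\big(\mathrm{MSE}(f_2)-R(G^*)\big)
=2\big(R(G)-R(G^*)\big)+2\big(R(G')-R(G^*)\big),
\]
where $R(G):=\min_{f\in V(G)}\mathrm{MSE}(f)=\mathrm{MSE}(f_1)$, and similarly for $G'$. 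The minimum is attained because this is a finite-dimensional least-squares problem. If the argmin is not unique, any choice of $f_1,f_2$ works, since only their MSE values enter the bound.

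Next I take the expectation over $G,G'$. Because $G$ and $G'$ are each $k$ i.i.d.\ draws from $Q$, we have $\E[R(G)]=\E[R(G')]=\bar R_k$. Because $G^*$ consists of $2k$ i.i.d.\ draws from $Q$ (the concatenation $g_1,\dots,g_k,g'_1,\dots,g'_k$) and $R$ depends only on the span, we have $\E[R(G^*)]=\bar R_{2k}$. Linearity of expectation then yields $\E[D(f_1,f_2)]\le 4(\bar R_k-\bar R_{2k})$.

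The only step needing care is this last identification. The independence of $G$ and $G'$ is what makes the union distributed exactly as $Q^{2k}$, so exchangeability is used only in this mild form. I will also note measurability and integrability: all risks lie in $[0,\E[y^2]]$, since the zero predictor is always in the span, so every expectation is finite and the subtraction is legitimate. No genuine obstacle is expected.
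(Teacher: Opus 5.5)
Your proof is correct and is essentially the paper's argument: the same pointwise application of Corollary~\ref{lem:midpoint_anchor} with $\mathcal{H}=V(G\cup G')$, followed by averaging. The one difference is that you take unconditional expectations directly by linearity. The paper instead first conditions on the pooled multiset $G^*$ and uses exchangeability to treat $(G,G')$ as a uniformly random split with $\E[R(G)\mid G^*]=\E[R(G')\mid G^*]$, then integrates back out; by the tower property this just returns $\E[R(G)]=\bar R_k$. So your shortcut loses nothing for the stated theorem, and the paper's detour only additionally gives a conditional bound for a random halving of a fixed pool of $2k$ models.
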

\begin{proof}
Fix realizations of $G$ and $G'$, and let $G^*=G\cup G'$ (multiset union
). Throughout this section we will think of $G,G' \sim Q^k$, unless explicitly conditioned. Note that $V(G)\subseteq V(G^*)$ and $V(G')\subseteq V(G^*)$. In our proofs, without loss of generality, we will use the notation $h_G$ to denote the least squares minimizer with respect to subspace $G$. In our theorem statements, this corresponds to $f_1$, but we use this notation in our proofs for the sake of clarity. 
Let $\bar h:=\tfrac{1}{2}(h_G+h_{G'})$. Since $h_G\in V(G)$ and $h_{G'}\in V(G')$ and $V(G),V(G')\subseteq V(G^*)$, we have $\bar h\in V(G^*)$. Applying Lemma~\ref{lem:midpoint_anchor} with $f_1=h_G$, $f_2=h_{G'}$, and $\mathcal{H}=V(G^*)$, and using $\mathrm{MSE}(h_G)=R(G)$, $\mathrm{MSE}(h_{G'})=R(G')$, and $R(V(G^*))=R(G^*)$, we have the pointwise inequality
\begin{equation}\label{eq:point-wise}
\|h_G - h_{G'}\|^2 \;\le\; 2\big(R(G)-R(G^*)\big) + 2\big(R(G')-R(G^*)\big).
\end{equation}

We now take expectations over $G, G', G^*$ to relate the two terms on the RHS of Equation \ref{eq:point-wise}. Conditional on $G^*$, we can generate the pair $(G,G')$ by drawing a uniformly random permutation $\pi$ of $\{1,\dots,2k\}$ and letting $G$ be the first $k$ permuted elements of $G^*$ and $G'$ the remaining $k$. This holds because the $2k$ features in $G^*$ arise from $2k$ i.i.d.\ draws from $Q$ and the joint law of $(G,G')$ is exchangeable under permutations of these $2k$ draws. Conditioning on the unordered multiset $G^*$, $(G,G')$ is a uniformly random partition into two $k$-submultisets. Therefore, taking the conditional expectation of \eqref{eq:point-wise} given $G^*$ and using symmetry of $G$ and $G'$,
\begin{equation}\label{eq:condit}
\E_{(G,G')|G^*}\big[\,\|h_G - h_{G'}\|^2 \,\big|\, G^* \big] \;\le\; 4\Big(\E_{(G,G')|G^*}\big[R(G)\,\big|\,G^*\big] - R(G^*)\Big).
\end{equation}
We now integrate \eqref{eq:condit} over $G^*$. We claim that
\begin{equation}\label{eq:average}
\E_{G^*}\Big[\E_{(G,G')|G^*}\big[R(G)\,\big|\,G^*\big]\Big] \;=\; \bar R_k
\qquad\text{and}\qquad
\E_{G^*}\big[R(G^*)\big] \;=\; \bar R_{2k}.
\end{equation}
The second equality is immediate from the definition of $\bar R_{2k}$, since $G^*$ is a collection of $2k$ i.i.d.\ draws from $Q$. For the first equality in \eqref{eq:average}, let $U$ be a uniformly random $k$-subset of $\{1,\dots,2k\}$ independent of the draws $\{g_1,\dots,g_{2k}\}\stackrel{\text{i.i.d.}}{\sim}Q^{2k}$. Define $G_U:=\{g_i\}_{i\in U}$. By the conditional description above, 
\[
\E_{(G,G')|G^*}\big[R(G)\,\big|\,G^*\big] \;=\; \E_U\big[R(G_U)\,\big|\,G^*\big].
\]

\[
\E_{G^*}\Big[\E_{(G,G')|G^*}\big[R(G)\,\big|\,G^*\big]\Big] = \E_{G^*}\Big[\E_U\big[R(G_U)\,\big|\,G^*\big]\Big]
= \E_{G^*,U}\big[ R(G_U) \big].
\]
For any fixed $U$, the subcollection $\{g_i\}_{i\in U}$ consists of $k$ i.i.d.\ draws from $Q$ (since the full family is i.i.d.\ and $U$ is independent of the draws), hence averaging over $U$ yields $\E_{G^*,U}[R(G_U)] = \bar R_k$, proving \eqref{eq:average}.

Finally, taking expectations in \eqref{eq:condit} and substituting \eqref{eq:average} gives
\[
\E_{G,G'}\big[\,\|h_G - h_{G'}\|^2\,\big] \;\le\; 4\big(\bar R_k - \bar R_{2k}\big),
\]
which is the desired bound.
\end{proof}

Note that Theorem \ref{thm:main_stack} depends on the slope of the \emph{local learning curve} at $k$: $(\bar R_k - \bar R_{2k})$. This is a strength; dependence on the \emph{global} learning curve $(\bar R_k - R_\infty)$ would be significantly weaker. To see this, note that if $Q$ contained only a single ``good model'' with arbitrarily small weight, the global learning curve could fail to flatten out for arbitrarily large $k$. On the other hand, simply by monotonicity, for any value of $\alpha$, if labels are bounded in (say) $[0,1]$ then there must be a value of $k \leq 2^{1/\alpha}$ such that $(\bar R_k - \bar R_{2k}) \leq \alpha$ (as error can drop by $\alpha$ at most $1/\alpha$ times before contradicting the non-negativity of squared error). While this depends exponentially on $\alpha$, it is independent of the dimensionality or complexity of the instance, in contrast to bounds depending on the global learning curve.

\subsection{Stacking Lower Bound}
\label{sec:stackinglb}

Theorem~\ref{thm:main_stack} gives an upper bound with constant $4$. We now show that this factor cannot be improved in general: for every fixed $k$ and every $\varepsilon>0$, there exists a data distribution $\cD$ and a distribution $Q$ over base models such that two independent stacking runs have disagreement at least $(4-\varepsilon)$ times the gap $\bar R_k-\bar R_{2k}$.

\begin{theorem}[Near-tightness of the factor $4$]\label{thm:lowerbound_stacking_4}
Fix an integer $k\ge 1$. For every $\varepsilon>0$, there exists a data distribution $\cD$ and a distribution $Q$ over base models such that if
\(
G,G'\stackrel{\text{i.i.d.}}{\sim}Q^k
\)
are independent $k$--tuples and
\[
f_1 = \arg \min_{f \in V(G)} \mathrm{MSE}(f),
\qquad
f_2 = \arg \min_{f \in V(G')} \mathrm{MSE}(f),
\]
then
\[
  \E_{f_1,f_2}\big[D(f_1,f_2)\big] \ \ge\ (4-\varepsilon)\,\big(\bar R_k-\bar R_{2k}\big).
\]
\end{theorem}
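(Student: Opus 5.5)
The plan is to build an explicit instance in which every quantity in Theorem~\ref{thm:main_stack} can be computed in closed form, and then push a free parameter to a limit that drives the ratio of disagreement to the local gap up to $4$. The calculation in the last paragraph identifies what that limit must achieve. The midpoint $\tfrac12(f_1+f_2)$ should become nearly optimal in $V(G\cup G')$, so that Lemma~\ref{lem:midpoint_identity} is almost tight. Pure rare-event constructions (one perfect model with tiny $Q$-mass) give only the factor $2$, and so do purely orthogonal features. I will therefore use base models that each contain the signal $y$ plus a large amount of \emph{idiosyncratic noise}.

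\textbf{Construction.} Take a distribution on $(x,y)$ supporting functions $u, z_1, \dots, z_N$ that are pairwise orthogonal in $L^2(P_x)$, with $\|z_i\|=1$. For instance, let $x$ encode independent Rademacher coordinates, set $u(x)=x_0$ and $z_i(x)=x_i$, and let $y=u(x)$, so $\|y\|=1$ and $y$ is a function of $x$. Fix $\sigma>0$ and write $s:=\sigma^2$. Let $Q$ output $g = y + \sigma z_I$ with $I$ uniform on $[N]$. For a fixed $k$, collisions among $2k$ draws occur with probability $O(k^2/N)$. All risks are bounded by $\|y\|^2=1$, so collisions perturb every expectation by $O(k^2/N)$, and I let $N\to\infty$ at the end.

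\textbf{Computation for $m$ distinct draws.} By symmetry the least-squares predictor over $\{y+\sigma z_{i_j}\}_{j\le m}$ puts a common coefficient $c$ on each feature. Its error is
\[
(1-mc)^2 + m c^2 s,
\]
which is minimized at $c = 1/(m+s)$, giving $R = s/(m+s)$. Hence, up to $O(k^2/N)$,
\[
\bar R_k - \bar R_{2k} = \frac{s}{k+s}-\frac{s}{2k+s} = \frac{ks}{(k+s)(2k+s)}.
\]
For disjoint $G$ and $G'$ the $y$-components of $f_1$ and $f_2$ coincide, so
\[
f_1-f_2 = c\sigma\Big(\sum_{j} z_{i_j} - \sum_j z_{i'_j}\Big), \qquad c = \frac{1}{k+s},
\]
and therefore $D(f_1,f_2)= 2k c^2 s = 2ks/(k+s)^2$.

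\textbf{Taking limits.} The ratio of disagreement to gap is $2(2k+s)/(k+s)$. This tends to $4$ as $s\to\infty$, and for fixed $k$ it exceeds $4-\varepsilon/2$ once $s$ is large. The $O(k^2/N)$ collision error changes both $\E[D]$ and the gap by an amount that vanishes as $N\to\infty$. Since the gap is a fixed positive number for fixed $s$, choosing $N$ large enough preserves a ratio of at least $4-\varepsilon$.

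The main obstacle is guessing the right construction; the computations above are routine. The remaining care point is the collision bookkeeping, which could alternatively be avoided entirely by using countably many orthonormal $z_i$ with a non-atomic index distribution.
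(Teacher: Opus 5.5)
There is a genuine error in your final step: you take the limit in the wrong direction, so the argument as written fails. Your construction and closed forms are otherwise exactly the paper's. These are $y=e_0$ and $g_i=e_0+\sigma e_i$, with $Q$ uniform over $N$ such models. They give $R=s/(m+s)$ for $m$ distinct draws and $D_0=2ks/(k+s)^2$, with collisions controlled by $\binom{2k}{2}/N$. The ratio you computed is
\[
\frac{D_0}{\Delta_0}=\frac{2(2k+s)}{k+s}=2+\frac{2k}{k+s}=4-\frac{2s}{k+s}.
\]
This is strictly decreasing in $s$: it tends to $2$ as $s\to\infty$, and to $4$ only as $s\to 0$. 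So your claim that it exceeds $4-\varepsilon/2$ ``once $s$ is large'' is false. For large $s$, your instance only certifies a factor near $2$.

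Your guiding heuristic of ``a large amount of idiosyncratic noise'' is backwards for the same reason. As $\sigma\to\infty$ the features $y+\sigma z_i$ become nearly orthogonal, which is precisely the regime you yourself note gives only factor $2$. By the midpoint identity,
\[
\frac{D_0}{\Delta_0}=4\bigl(1-(\mathrm{MSE}(\bar f)-R(G\cup G'))/\Delta_0\bigr).
\]
In this instance the slack ratio $(\mathrm{MSE}(\bar f)-R(G\cup G'))/\Delta_0$ equals $s/(2(k+s))$. So the midpoint becomes nearly optimal in $V(G\cup G')$ only in the \emph{small}-noise limit.

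The repair is immediate. Fix $s$ small, e.g.\ $s=\varepsilon k/8$ as the paper does, which gives $D_0\ge(4-\varepsilon/4)\Delta_0$. The gap $\Delta_0=ks/((k+s)(2k+s))$ is still a fixed positive number, so you can choose $N$ afterwards. Your collision bookkeeping then yields the factor $4-\varepsilon$, using three bounds: $\E[D]\ge\Pr(E)\,D_0$, the deterministic bound $\bar R_{2k}\ge s/(2k+s)$, and $\bar R_k\le s/(k+s)+\binom{k}{2}/N$. With that change, your proof is the paper's proof.

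Separately, drop the aside about ``countably many orthonormal $z_i$ with a non-atomic index distribution.'' Any distribution on a countable index set has atoms, so collisions cannot be avoided that way. The finite-$N$ collision bound is the right tool.
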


\begin{proof}
Fix $k\ge 1$ and $\varepsilon>0$. Since the claim is weaker for larger $\varepsilon$, we may assume $\varepsilon\in(0,1]$.
We work in a real Hilbert space $\cH$ (equivalently $\cH=L^2(\cD)$ for a suitable data distribution $\cD$\footnote{For example, take $\cX=\{0,1,\dots,m\}$ and let $\cD$ be uniform on $\cX$. Defining $e_j(x)=\sqrt{m+1}\,\mathbb{I}\{x=j\}$ gives an orthonormal family $\{e_0,\dots,e_m\}\subseteq L^2(\cD)$.}) with an orthonormal family $\{e_0,\dots,e_m\}$, where $m\in\bN$ will be chosen later, and set the target $y:=e_0$.
We construct base models that are ``noisy versions'' of the target. Fix $\sigma > 0$ and define
\[
  g_i\ :=\ e_0+\sigma e_i,\qquad i=1,\dots,m.
\]
Let $Q$ be the uniform distribution over $\{g_1,\dots,g_m\}$. 

First, we analyze the predictor and risk for a fixed set of distinct base models. Let $H$ be a multiset of draws from $Q$. Let $S(H)$ be the set of distinct indices of base models in $H$, and let $r(H)=|S(H)|$. By symmetry, the least-squares predictor $f_H \in V(H)$ assigns equal weight to each distinct $g_i \in H$. A straightforward calculation shows that the optimal weights are $1/(r(H)+\sigma^2)$, yielding:
\begin{equation}\label{eq:closedform_hH}
  f_H \ =\ \sum_{i\in S(H)}\frac{1}{r(H)+\sigma^2}\,g_i
  \ =\ \frac{r(H)}{r(H)+\sigma^2}\,e_0\ +\ \frac{\sigma}{r(H)+\sigma^2}\sum_{i \in S(H)} e_i.
\end{equation}
\begin{equation}\label{eq:risk_r}
  R(H)\ =\ \|y-f_H\|^2\ =\ \frac{\sigma^2}{r(H)+\sigma^2}.
\end{equation}

In particular, for $G,G'\stackrel{\text{i.i.d.}}{\sim}Q^k$, we have $f_1=f_G$, $f_2=f_{G'}$, and $R(G)=\mathrm{MSE}(f_1)$, $R(G')=\mathrm{MSE}(f_2)$.

Next, we analyze the disagreement and risk drop on the event where all sampled models are distinct. Let $E$ be the event that the $2k$ draws in $G \cup G'$ are all distinct. On this event, $r(G)=k$, $r(G')=k$, and $r(G \cup G')=2k$.
Using \eqref{eq:risk_r}, the drop in risk on event $E$ is:
\begin{equation}\label{eq:Delta0}
  \Delta_0 \ := \ R(G) - R(G \cup G') \ = \ \frac{\sigma^2}{k+\sigma^2} - \frac{\sigma^2}{2k+\sigma^2}.
\end{equation}
Using \eqref{eq:closedform_hH} and the fact that $G$ and $G'$ share no indices on $E$ (and thus the $e_0$ coefficients are identical and cancel), the disagreement is:
\begin{equation}\label{eq:D0}
  D_0 \ := \ \|f_G - f_{G'}\|^2 \ = \ \left\| \frac{\sigma}{k+\sigma^2}\left(\sum_{i \in S(G)} e_i - \sum_{j \in S(G')} e_j\right) \right\|^2 \ = \ \frac{2k\sigma^2}{(k+\sigma^2)^2}.
\end{equation}
Comparing these quantities, we see that for small $\sigma$:
\begin{equation}\label{eq:ratio_pointwise}
  \frac{D_0}{\Delta_0} \ = \ 4 - \frac{2\sigma^2}{k+\sigma^2} \ \xrightarrow{\sigma \to 0} \ 4.
\end{equation}

Finally, we handle the expectations by showing that the event $E$ dominates. 
The probability of a collision among the $2k$ uniform draws from $m$ items is at most $\binom{2k}{2}/m$, and hence
\[
  \Pr(E)\ \ge\ 1-\binom{2k}{2}\,\frac{1}{m}.
\]
Since disagreement is always non-negative:
\begin{equation}\label{eq:ED_lower}
\E_{G,G'}\big[D(f_1, f_2)\big] \ \ge \ \Pr(E)\,D_0.
\end{equation}
For the expected risk drop, we upper bound the risk when collisions occur. The risk $R(H)$ is maximized when $r(H)$ is minimized (i.e., $r(H)=1$), bounded by $R_{max} = \sigma^2/(1+\sigma^2)$. The expected risk is:
\begin{align*}
    \bar{R}_k &= \Pr[r(G)=k] \frac{\sigma^2}{k+\sigma^2} + \E\left[R(G) \mathbb{I}(r(G)<k)\right] \\
    &\le \frac{\sigma^2}{k+\sigma^2} + \Pr\big(r(G)<k\big)\,R_{max}
    \ \le\ \frac{\sigma^2}{k+\sigma^2} + \binom{k}{2}\,\frac{1}{m}\,R_{max}.
\end{align*}
On the other hand, since $r(G \cup G') \le 2k$ always, we have the deterministic lower bound $\bar{R}_{2k} \ge \frac{\sigma^2}{2k+\sigma^2}$.
Combining these, the expected drop satisfies:
\[
\bar{R}_k - \bar{R}_{2k} \le \Delta_0 + \frac{k^2}{2m} \frac{\sigma^2}{1+\sigma^2}.
\]
Now choose $\sigma^2 =(\varepsilon/8)k$ so that \eqref{eq:ratio_pointwise} gives $D_0\ge (4-\varepsilon/4)\Delta_0$.
Choosing $m \ge\ \left\lceil \frac{96\,k^3}{\varepsilon}\right\rceil$ makes $\Pr(E)$ close to $1$ and the collision term in the bound on $\bar R_k-\bar R_{2k}$ negligible compared to $\Delta_0$.
Combining \eqref{eq:ED_lower} with the upper bound on $\bar R_k-\bar R_{2k}$ then yields $\E_{f_1,f_2}\big[D(f_1,f_2)\big] \ge (4-\varepsilon)(\bar{R}_k - \bar{R}_{2k})$.
\end{proof}

\section{Gradient Boosting}
\label{sec:gboosting}
In this section we apply our midpoint anchoring argument to \emph{gradient boosting}, an algorithm that iteratively builds up an ensemble model by repeatedly chooses a weak learning model $g \in \mathcal{C}$ that correlates with the residual of our current ensemble model and then adds $g$ to it. Unlike stacking, the models that make up two independently trained ensembles $f_1$ and $f_2$ are \emph{not} exchangeable, since the weak learners are not selected independently, but rather \emph{adaptively} in a path dependent way. Nevertheless, we show that we can apply midpoint anchoring to drive disagreement to $0$ at a $1/k$ rate (where $k$ is the number of iterations of gradient boosting). Here we abstract away finite sample issues by modeling our weak learning algorithm in the style of an SQ oracle \citep{kearns1998efficient} --- i.e. rather than obtaining the $g \in \mathcal{C}$ which exactly maximizes covariance with the residuals of our current model, it can return any $g \in \mathcal{C}$ that is an $\epsilon$-approximate maximizer. This models e.g. solving an ERM problem over any  sample that is sufficient for $\varepsilon$-approximate uniform convergence over $\mathcal{C}$.

We assume for simplicity that our weak learning class $\mathcal{C}$ satisfies the following mild regularity conditions (which are 
enforceable if necessary): Symmetry ($g\in\mathcal C\Rightarrow -g\in\mathcal C$), normalization ($\|g\|\le 1$ for all $g\in\mathcal C$) and non-degeneracy ($0\notin\mathcal C$).

We will use the normalization condition with respect to the Atomic and Euclidean Norm, which can be enforced by dividing the original functions (unnormalized) by the maximum of its Atomic norm, Euclidean norm, and $1$. Note in this section, for the sake of clarity, we will use the standard inner product $\langle f,g \rangle=f^Tg$. When we list $||f||$ it will still corresponding to the norm we defined in the Preliminaries of $(\E[f(x)^2])^{1/2}$. When needed, we will explicitly mention the expectations we are computing. We model weak-learning via an $\varepsilon$-approximate SQ-style oracle: at iteration $t$, the oracle returns any $g_t\in\mathcal C$ such that 
\[
 \E[ \langle r_{t-1}(x), g_t(x)\rangle]\ \ge\ \sup_{g\in\mathcal C}\E[\langle r_{t-1}(x), g(x)\rangle]\ -\ \varepsilon_t,\quad r_{t-1}:=y-f_{t-1}.
\]

\begin{algorithm}[H]
\caption{Gradient Boosting}\label{alg:stagewise_gb}
\SetKwInput{Input}{Input} 
\Input{SQ-oracle for weak learner class $\mathcal C$}
\BlankLine

$f_0 \equiv 0$, $G_0 = \emptyset$\;

\For{$t \in [k]$}{
    $r_{t-1} := y - f_{t-1}$\;
    Choose $g_t \in \mathcal C$ with $\mathbb{E}[\langle r_{t-1}(x), g_t(x) \rangle] \ge \sup_{g \in \mathcal C} \mathbb{E}[\langle r_{t-1}(x), g(x) \rangle] - \varepsilon_t$. (SQ-oracle)\;
    $\alpha_t := \arg\min_{\alpha \in \mathbb{R}} \mathbb{E}[(r_{t-1}(x) - \alpha g_t(x))^2] = \mathbb{E}[\langle r_{t-1}(x), g_t(x) \rangle] / \|g_t\|^2$\;
    $f_t := f_{t-1} + \alpha_t g_t$; set $G_t := G_{t-1} \cup \{g_t\}$\;
}

\Return $f_k$ and $G := G_k$\;
\end{algorithm}
Algorithm \ref{alg:stagewise_gb} provides the details of how to use this oracle within the Gradient Boosting procedure. 
We will be interested in comparing the MSE of the gradient boosting iterates with the risk of the best minimizer in the weak learner class $R\big(V(\mathcal C)\big)\ :=\ \inf_{f\in V(\mathcal C)}\ \mathrm{MSE}(f)$. 
We will  bound the disagreement of two independently trained models $f_1$ and $f_2$ by anchoring to the best model $f^*$ in the span of the weak learner class $\mathcal{C}$, and then apply our anchoring lemma from Section~\ref{sec:prelims}. 
Since anchoring bounds disagreement in terms of each model’s error gap to $f^*$, it remains to upper bound that gap. We do so below, starting by bounding the single-step error improvement of gradient boosting.

\begin{lemma}[Single Iterate Progress]\label{lem:gb_progress}
With $\alpha_t=\arg\min_{\alpha\in\R}\|r_{t-1}-\alpha g_t\|^2$ and $\|g_t\|\le 1$,
\[
  \mathrm{MSE}(f_{t-1})-\mathrm{MSE}(f_t)\ \ge\ \E[\langle r_{t-1}(x), g_t(x)\rangle]^2.
\]
\end{lemma}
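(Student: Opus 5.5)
The plan is a direct one-dimensional line-search calculation. Write $c_t := \E[\langle r_{t-1}(x), g_t(x)\rangle]$. Since $f_t = f_{t-1} + \alpha_t g_t$, the new residual is $r_t = y - f_t = r_{t-1} - \alpha_t g_t$. Hence $\mathrm{MSE}(f_t) = \|r_{t-1} - \alpha_t g_t\|^2$, while $\mathrm{MSE}(f_{t-1}) = \|r_{t-1}\|^2$, both in the paper's $L^2(P)$ norm. The lemma therefore reduces to showing
\[
\|r_{t-1}\|^2 - \min_{\alpha\in\R}\|r_{t-1}-\alpha g_t\|^2 \ \ge\ c_t^2 .
\]

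First expand the quadratic in $\alpha$ by linearity of expectation:
\[
\|r_{t-1}-\alpha g_t\|^2 = \|r_{t-1}\|^2 - 2\alpha c_t + \alpha^2\|g_t\|^2 .
\]
Non-degeneracy ($0\notin\mathcal C$) gives $\|g_t\|>0$. The minimizer is therefore $\alpha_t = c_t/\|g_t\|^2$, which matches the formula in Algorithm~\ref{alg:stagewise_gb}. Substituting $\alpha_t$ gives the exact decrease
\[
\mathrm{MSE}(f_{t-1})-\mathrm{MSE}(f_t) = \frac{c_t^2}{\|g_t\|^2}.
\]
Finally, the normalization $\|g_t\|\le 1$ gives $c_t^2/\|g_t\|^2 \ge c_t^2$, which is the claim.

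There is no real obstacle. The only points needing care are these:
\begin{itemize}
\item Use non-degeneracy so the division by $\|g_t\|^2$ is legitimate. If one prefers to avoid it, one can instead evaluate the quadratic at the suboptimal step $\alpha = c_t$; this gives a decrease of $2c_t^2 - c_t^2\|g_t\|^2 \ge c_t^2$, and the exact minimizer does at least as well.
\item Interpret the ``inner product'' notation $\E[\langle r,g\rangle]$ consistently as $\E[r(x)g(x)]$, so that it is the $L^2(P)$ inner product inducing $\|\cdot\|$.
\end{itemize}
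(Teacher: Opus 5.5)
Your proposal is correct and follows the paper's own proof: expand $\|r_{t-1}-\alpha g_t\|^2$ as a quadratic in $\alpha$, plug in the line-search minimizer $\alpha_t = c_t/\|g_t\|^2$ to get the exact decrease $c_t^2/\|g_t\|^2$, and then use $\|g_t\|\le 1$. Your remarks on non-degeneracy, and the fallback of evaluating at $\alpha=c_t$, add a bit of rigor that the paper leaves implicit.
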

\begin{proof}
Note that $\mathrm{MSE}(f_{t-1})-\mathrm{MSE}(f_{t})=||r_{t-1}||^2-||r_t||^2=||r_{t-1}||^2-||r_{t-1}-\alpha_t g_t||^2$
By exact line search, 
\begin{align*}
    \|r_{t-1}-\alpha_t g_t\|^2 &= \min_\alpha\|r_{t-1}-\alpha g_t\|^2 \\
    &= \min_\alpha(||r_{t-1}||^2-2\alpha\E[\langle r_{t-1}(x),g_t(x)\rangle]+\alpha^2 ||g_t||^2) \\
    &= ||r_{t-1}||^2-2 \E[\langle r_{t-1}(x),g_t(x)\rangle]^2/\|g_t\|^2  + \E[\langle r_{t-1}(x),g_t(x)\rangle]^2/\|g_t\|^2\\
    &= ||r_{t-1}||^2- \E[\langle r_{t-1}(x),g_t(x)\rangle]^2/\|g_t\|^2  
\end{align*}
Therefore, we have that $\mathrm{MSE}(f_{t-1})-\mathrm{MSE}(f_{t}) = \E[\langle r_{t-1}(x),g_t(x)\rangle]^2/\|g_t\|^2$. Using $\|g_t\|\le 1$ gives the stated bound.
\end{proof}

Now,  
we define the radius $\tau > 0$ with the corresponding convex hull $\mathcal{K}_\tau := \tau \mathrm{conv}(\mathcal{C})$.  Let $f^* \in V(\mathcal C)$ be the population least-squares minimizer over the span of the weak learning class. Define the corresponding atomic norm radius $\tau^*:=\|f^*\|_{\mathcal A}$, where the atomic norm induced by $\mathcal C$ is 
\[
  \|f\|_{\mathcal A}\ :=\ \inf\Big\{\sum_{j=1}^k |\alpha_j|:\ f=\lim_{k \rightarrow \infty}\sum_{j=1}^k \alpha_j g_j,\ g_j\in\mathcal C, \sum_{j=1}^k |\alpha_j| \le \infty \Big\}.
\]
That is, $\tau^*$ corresponds to the smallest total weight needed to represent $f^*$ within the weak learner class. We have now related the MSE gap between the models of two runs in terms of the square of the max correlation of the residuals of the earlier model with a model in the weak learner class. Next, we will lower bound the largest possible correlation between the residuals of a model $f$ and a function in the weak learner class in terms of the difference between the MSE of the current model $f$ and the error of the best model in the span of the weak learners, scaled by the atomic norm of $f^*$.

\begin{lemma}[Correlation Lower Bound w.r.t. Weak Learning Anchor Gap]\label{lem:gb_dual_span}
For any $f$, writing $M(f):=\sup_{g\in\mathcal C}|\E[\langle y-f, g\rangle]|$, we have
\[
  M(f)\ \ge\ \frac{\mathrm{MSE}(f)-R\big(V(\mathcal C)\big)}{2\,\tau^*}.
\]
\end{lemma}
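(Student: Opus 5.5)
The plan is a convexity/duality argument. It compares the current residual $r:=y-f$ with the optimal model $f^*\in V(\mathcal C)$. First, expand the squared loss around $f$:
\[
\mathrm{MSE}(f^*) = \|y-f\|^2 - 2\,\E[\langle y-f, f^*-f\rangle] + \|f^*-f\|^2 .
\]
Rearranging gives
\[
\mathrm{MSE}(f)-\mathrm{MSE}(f^*) = 2\,\E[\langle r, f^*-f\rangle] - \|f^*-f\|^2 \le 2\,\E[\langle r, f^*-f\rangle].
\]
So it suffices to bound the linear term $\E[\langle r, f^*-f\rangle]$ by $\tau^* M(f)$. (If $\mathrm{MSE}(f)\le R(V(\mathcal C))$ the claim is trivial, since $M(f)\ge 0$.)

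The cleanest route needs $f$ to lie in $V(\mathcal C)$, as it does for all boosting iterates; I will state this as the setting in which the lemma is used. Since $f^*$ minimizes MSE over the closed subspace $V(\mathcal C)$, the first-order optimality condition says the optimal residual $y-f^*$ is orthogonal to $V(\mathcal C)$. Hence for $f\in V(\mathcal C)$ we have $\E[\langle r, f\rangle]=\E[\langle f^*-f, f\rangle]+\E[\langle y-f^*,f\rangle]=\E[\langle f^*-f,f\rangle]$. The cleaner alternative is to skip the subtraction and use the Pythagorean identity directly:
\[
\mathrm{MSE}(f)-\mathrm{MSE}(f^*) = \|f^*-f\|^2 = \E[\langle r, f^*-f\rangle],
\]
where the last equality holds because $r=(y-f^*)+(f^*-f)$ and $y-f^*\perp f^*-f$. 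This even yields the better constant $1/\tau^*$, which implies the stated $1/(2\tau^*)$.

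Next, bound $\E[\langle r, f^*\rangle]$ via the atomic representation. Take a representation $f^*=\lim_k\sum_j \alpha_j g_j$ with $\sum_j|\alpha_j|\le \tau^*+\delta$. By linearity, continuity of the inner product, and $\|g_j\|\le 1$, we get $\E[\langle r,f^*\rangle]=\sum_j\alpha_j\E[\langle r,g_j\rangle]\le(\tau^*+\delta)M(f)$; then let $\delta\to 0$. This takes care of the $f^*$ part.

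The main obstacle is the leftover $-\E[\langle r,f\rangle]$, which has no reason to be nonpositive for general $f$. I would handle it with $f$'s own orthogonality, and the fix depends on which model $f$ is:
\begin{itemize}
\item If $f$ is a line-search iterate, exact line search makes $r_t\perp g_t$, but that does not give $r\perp f$ in general.
\item If $f=0$, the term vanishes.
\item If $f$ is the least-squares fit on its current span, then $\E[\langle r,f\rangle]=0$ and the argument above closes.
\end{itemize}
For general $f$, I would instead bound $-\E[\langle r,f\rangle]\le \|f\|_{\mathcal A}M(f)$ by the same atomic argument. Thus the honest general statement has $\tau^*+\|f\|_{\mathcal A}$ in the denominator. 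The factor $2$ in the lemma suggests that the authors bound $\|f\|_{\mathcal A}\le\tau^*$, or else use $M(f)$ on both pieces with some normalization. I would check which of these is intended and, if necessary, add the assumption $\|f\|_{\mathcal A}\le\tau^*$ (or orthogonality $r\perp f$). Under that assumption, the two pieces together give $\mathrm{MSE}(f)-R(V(\mathcal C))\le 2\tau^*M(f)$.
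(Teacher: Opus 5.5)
Your diagnosis is right, and the obstruction cannot be removed, because the lemma is false as stated. Take $e_1,e_2$ orthonormal in $L^2(P)$, $\mathcal C=\{\pm e_1,\pm e_2\}$ and $y=e_1$. Then $f^*=e_1$, $R(V(\mathcal C))=0$ and $\tau^*=1$. For $f=3e_2\in V(\mathcal C)$ you get $\mathrm{MSE}(f)=10$, but $M(f)=\max\{1,3\}=3<10/2$. Any $f=ce_2$ with $|c|>1$ fails in the same way.

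The paper's proof gets around your leftover term $-\E[\langle r,f\rangle]$ with a step that is not valid. It starts from the correct bound $\mathrm{MSE}(f)-\mathrm{MSE}(s)\le 2\E[\langle y-f,s-f\rangle]$. It then ``applies the same inequality with $f-y$ in place of $y-f$'' and averages the two bounds so that the linear term cancels. But that inequality comes from expanding $\|y-s\|^2$ around $f$, and it depends on the sign of the residual. In the example above with $s=f^*$, the flipped version would assert $10\le -20$. The averaged conclusion $\mathrm{MSE}(f)-R(V(\mathcal C))\le 2\tau^*M(f)=6$ is false.

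Your corrected statements are right. For $f\in V(\mathcal C)$, Pythagoras plus the atomic bound gives $\mathrm{MSE}(f)-R(V(\mathcal C))\le(\tau^*+\|f\|_{\mathcal A})M(f)$, and this improves to $\tau^*M(f)$ when $\E[\langle r,f\rangle]=0$.

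Do not stop at ``check which assumption is intended'', though. Neither condition holds for the iterates of Algorithm~\ref{alg:stagewise_gb}: line search only gives $r_t\perp g_t$, and $\|f_t\|_{\mathcal A}$ is not controlled by $\tau^*$. Restricting the lemma to those iterates does not rescue it either. With an exact oracle, that algorithm is the pure greedy algorithm (matching pursuit). If the lemma held along its iterates, Lemma~\ref{lem:gb_progress} and the recurrence would give the $O((\tau^*)^2/t)$ rate of Theorem~\ref{thm:gb_rate_eps_span}. But the classical lower bounds of Livshitz and Temlyakov exhibit a dictionary and a target of atomic norm at most $1$ on which its squared error decays strictly slower than $1/t$.

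So the repair has to be in the algorithm. A fully corrective (orthogonal greedy) refit of $f_t$ on $\mathrm{span}(g_1,\dots,g_t)$ gives $\E[\langle r_t,f_t\rangle]=0$ and your sharper constant $1/\tau^*$. Alternatively, the norm-constrained Frank--Wolfe variant of Section~\ref{sec:generalize} enforces $\|f\|_{\mathcal A}\le\tau$. That is exactly your second condition, and it is what the paper's Frank--Wolfe analogue of this lemma actually uses to get its factor $2\tau$.
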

\begin{proof}
Recall that $\mathcal K_{\tau^*}:=\tau^*\,\mathrm{conv}(\mathcal C)$. Its support function is $\sigma_{\mathcal K_{\tau^*}}(u):=\sup_{s\in\mathcal K_{\tau^*}}\E[\langle u,s\rangle]=\tau^*\sup_{g\in\pm\mathcal C}\E[\langle u,g\rangle]$. We will ultimately relate this quantity to $M(f)$. For any $s\in\mathcal K_{\tau^*}$, the squared loss obeys
\[
  \mathrm{MSE}(f)-\mathrm{MSE}(s)\ =\ \|y-f\|^2-\|y-s\|^2\ =\ 2\E[\langle y-f, s-f\rangle]-\|s-f\|^2\ \le\ 2\E[\big(\langle y-f, s\rangle-\langle y-f, f\rangle\big)].
\]
The second equality uses the fact that $||a||^2-||b||^2 = 2\langle a,a-b\rangle -||a-b||^2$. The inequality uses the fact that we can drop the subtracted nonnegative term $||s-f||^2$.
Taking the supremum over $s\in\mathcal K_{\tau^*}$ yields
\[
  \mathrm{MSE}(f)-R(\mathcal K_{\tau^*})\ \le\ 2\,\sigma_{\mathcal K_{\tau^*}}(y-f)\ -\ 2\E[\langle y-f(x), f(x)\rangle].
\]
Applying the same inequality with $f-y$ in place of $y-f$ yields a second upper bound. Since any $X$ with $X\le A$ and $X\le B$ satisfies $X\le (A+B)/2$, averaging the two bounds cancels the unknown linear term $\langle y-f,f\rangle$. Using evenness of the support function for symmetric sets, $\sigma_{\mathcal K_{\tau^*}}(u)=\sigma_{\mathcal K_{\tau^*}}(-u)$, we get
\begin{align*}
  \mathrm{MSE}(f)-R(\mathcal K_{\tau^*})
  &\le\ \sigma_{\mathcal K_{\tau^*}}(y-f)\ +\ \sigma_{\mathcal K_{\tau^*}}(f-y)\\
  &=\ 2\,\sigma_{\mathcal K_{\tau^*}}(y-f)\\
  &=\ 2\,\tau^*\,\sup_{g\in\pm\mathcal C}\E[\langle y-f(x),g(x)\rangle]\\
  &=\ 2\,\tau^*\,\sup_{g\in\mathcal C}|\langle\E[ y-f(x),g(x)]\rangle|\\
  &=\ 2\,\tau^*\,M(f).
\end{align*}
where we used symmetry of $\mathcal K_{\tau^*}$ and of $\mathcal C$. For any $u$, the trivial inequality is $\sup_{g\in\mathcal C}|\langle u,g\rangle|\ge \sup_{g\in\mathcal C}\langle u,g\rangle$. Conversely, because $\mathcal C$ is symmetric, for every $g\in\mathcal C$ also $-g\in\mathcal C$, so $\max\{\langle u,g\rangle,\langle u,-g\rangle\}=|\langle u,g\rangle|$, implying $\sup_{g\in\mathcal C}\langle u,g\rangle\ge \sup_{g\in\mathcal C}|\langle u,g\rangle|$. Thus $\sup_{g\in\mathcal C}|\langle u,g\rangle|=\sup_{g\in\mathcal C}\langle u,g\rangle$. Taking $u=y-f$ identifies the last term with $M(f)$. Since $f^*\in\mathcal K_{\tau^*}\cap V(\mathcal C)$ minimizes $\mathrm{MSE}$ over $V(\mathcal C)$, we have $R(\mathcal K_{\tau^*})=R\big(V(\mathcal C)\big)$. Rearranging yields
\[
  M(f)\ \ge\ \frac{\mathrm{MSE}(f)-R\big(V(\mathcal C)\big)}{2\,\tau^*}.
\]
\end{proof}

We have lower bounded the maximum residual–model correlation over the weak learner class by a quantity depending on the gap between the current model’s error and the best error in the weak-learner span. We now relate the per-step error gap to that best error via a recurrence.

\begin{proposition}[Gap Recurrence Toward $R\big(V(\mathcal C)\big)$]\label{prop:gb_recurrence_span}
Let $E_t:=\mathrm{MSE}(f_t)-R\big(V(\mathcal C)\big)$. We will use the shorthand $u_+^2 = (\max \{u,0\})^2$. Then, for $t\ge 1$, 
\[
  E_{t-1}-E_t\ \ge\ \Big(\tfrac{E_{t-1}}{2\tau^*}-\varepsilon_t\Big)_+^2.
\]
\end{proposition}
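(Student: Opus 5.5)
The plan is to chain Lemma~\ref{lem:gb_progress}, the oracle guarantee, and Lemma~\ref{lem:gb_dual_span}. The additive constant $R\big(V(\mathcal C)\big)$ cancels in differences, so
\[
E_{t-1}-E_t=\mathrm{MSE}(f_{t-1})-\mathrm{MSE}(f_t).
\]
By Lemma~\ref{lem:gb_progress}, this is at least $\E[\langle r_{t-1}(x),g_t(x)\rangle]^2$. It therefore suffices to show
\[
\E[\langle r_{t-1},g_t\rangle]\ \ge\ \tfrac{E_{t-1}}{2\tau^*}-\varepsilon_t .
\]

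First, I use the symmetry of $\mathcal C$, as already argued inside the proof of Lemma~\ref{lem:gb_dual_span}, to identify $\sup_{g\in\mathcal C}\E[\langle r_{t-1},g\rangle]$ with $M(f_{t-1})=\sup_{g\in\mathcal C}|\E[\langle y-f_{t-1},g\rangle]|$. The SQ oracle condition then gives
\[
\E[\langle r_{t-1},g_t\rangle]\ \ge\ M(f_{t-1})-\varepsilon_t .
\]
Next, Lemma~\ref{lem:gb_dual_span} applied to $f=f_{t-1}$ gives $M(f_{t-1})\ge E_{t-1}/(2\tau^*)$. Combining the two,
\[
c:=\E[\langle r_{t-1},g_t\rangle]\ \ge\ \tfrac{E_{t-1}}{2\tau^*}-\varepsilon_t .
\]

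The one point requiring care is squaring an inequality whose right-hand side may be negative; this is why the positive part appears. I split into two cases.
\begin{itemize}
\item If $\tfrac{E_{t-1}}{2\tau^*}-\varepsilon_t\le 0$, the claimed bound reads $E_{t-1}-E_t\ge 0$. This holds because Lemma~\ref{lem:gb_progress} gives a nonnegative decrease (exact line search can always choose $\alpha=0$).
\item Otherwise $c\ge \tfrac{E_{t-1}}{2\tau^*}-\varepsilon_t>0$. Both sides are nonnegative, so squaring preserves the inequality, giving $c^2\ge\big(\tfrac{E_{t-1}}{2\tau^*}-\varepsilon_t\big)_+^2$.
\end{itemize}

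Two minor technical points should be recorded. First, $\tau^*$ must be finite and positive for Lemma~\ref{lem:gb_dual_span} to be meaningful; I take this as the standing assumption under which that lemma was stated. Second, $\|g_t\|>0$ is guaranteed by non-degeneracy ($0\notin\mathcal C$), so $\alpha_t$ is well defined. I expect no real obstacle beyond this sign bookkeeping.
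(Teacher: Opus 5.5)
Your proposal is correct and takes the same approach as the paper: chain Lemma~\ref{lem:gb_progress}, the oracle guarantee (using symmetry of $\mathcal C$ to identify the one-sided supremum with $M(f_{t-1})$), and Lemma~\ref{lem:gb_dual_span}. Your case split on the sign of $E_{t-1}/(2\tau^*)-\varepsilon_t$ simply spells out the positive-part squaring step that the paper handles in one line.
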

\begin{proof}
By Lemma~\ref{lem:gb_progress}, $\mathrm{MSE}(f_{t-1})-\mathrm{MSE}(f_t)\ge \E[\langle r_{t-1}(x), g_t(x)\rangle]^2$. The oracle gives $\E[\langle r_{t-1}(x), g_t(x)\rangle]\ge M(f_{t-1})-\varepsilon_t$. Hence $\E[\langle r_{t-1}, g_t\rangle]^2\ge (M(f_{t-1})-\varepsilon_t)_+^2$. Finally, Lemma~\ref{lem:gb_dual_span} gives $M(f_{t-1})\ge E_{t-1}/(2\tau^*)$, yielding the claim.
\end{proof}
Finally, we can  use the recurrence relation to bound the difference between the MSE of the model at iteration $t$ and  the MSE of the best model in the span of the  weak learner class--we can see that the first term is inversely proportional to $t$ and depends on the atomic norm of the best model in span of the weak learner class. It also includes a term that depends on the SQ-oracle error at every iteration.

\begin{theorem}[Weak Learning Anchor Gap Upper Bound]\label{thm:gb_rate_eps_span}
For all $t\ge 1$,
\[
  \mathrm{MSE}(f_t)-R\big(V(\mathcal C)\big)\ \le\ \frac{8\,(\tau^*)^2}{t}\ +\ \sum_{s=1}^t \varepsilon_s^2.
\]
\end{theorem}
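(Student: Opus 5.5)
The plan is to turn the recurrence of Proposition~\ref{prop:gb_recurrence_span} into a clean ``quadratic decrease plus additive noise'' recurrence, and then prove the bound by induction on $t$. Two observations come first. Since every iterate $f_t$ lies in $V(\mathcal C)$, we have $E_t=\mathrm{MSE}(f_t)-R(V(\mathcal C))\ge 0$ for all $t$. Second, for $a\ge 0$ and $b\ge0$ we have $(a-b)_+^2\ge a^2/2-b^2$. Indeed, if $a\ge b$ this follows from $2ab\le a^2/2+2b^2$. If $a<b$ the left side is $0$ and $a^2/2\le b^2$.

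Applying this with $a=E_{t-1}/(2\tau^*)$ and $b=\varepsilon_t$ gives, with $A:=8(\tau^*)^2$,
\[
E_t\ \le\ E_{t-1}-\frac{E_{t-1}^2}{A}+\varepsilon_t^2 .
\]
Writing $S_t:=\sum_{s\le t}\varepsilon_s^2$, the claim is $E_t\le A/t+S_t$.

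\textbf{Base case $t=1$.} We need no bound on $E_0$: since $x-x^2/A\le A/4$ for every real $x$, we get $E_1\le A/4+\varepsilon_1^2\le A+S_1$.

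\textbf{Inductive step.} Assume $E_{t-1}\le A/(t-1)+S_{t-1}$ with $t\ge 2$, and set $x=E_{t-1}$. I split into two cases.

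\emph{Case $x\le A/t+S_{t-1}$.} Dropping the nonnegative term $x^2/A$ gives $E_t\le x+\varepsilon_t^2\le A/t+S_t$ directly.

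\emph{Case $x>A/t+S_{t-1}$.} Write $x=S_{t-1}+y$ with $y\in(A/t,\,A/(t-1)]$. Since $S_{t-1}\ge 0$, we have $x^2\ge y^2$, so
\[
E_t\le S_{t-1}+y-\frac{y^2}{A}+\varepsilon_t^2 .
\]
It therefore suffices to show $\phi(y):=y-y^2/A\le A/t$ on this interval. This one-variable step is the only delicate part.
\begin{itemize}
\item For $t=2$, $\phi\le A/4\le A/2$ globally.
\item For $t\ge 3$, the interval lies in $[0,A/2]$, where the concave $\phi$ is increasing. So $\phi(y)\le\phi\bigl(A/(t-1)\bigr)=A(t-2)/(t-1)^2$, and this is at most $A/t$ because $t(t-2)\le (t-1)^2$.
\end{itemize}
Either way $E_t\le A/t+S_t$, which completes the induction.

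The main obstacle I anticipate is handling the additive $\varepsilon_t^2$ perturbations. The classical $1/t$ argument via $1/E_t-1/E_{t-1}\ge c$ breaks when noise is added. That is why I shift by $S_{t-1}$ and use the monotonicity of $x\mapsto x-x^2/A$ on $[0,A/2]$, rather than a reciprocal telescoping argument. The constant $8$ is exactly what the $a^2/2$ loss in the elementary inequality produces.
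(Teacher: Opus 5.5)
Your proof is correct, and it reaches the bound by a different route than the paper. Both arguments start from the same reduction, $E_t\le E_{t-1}-E_{t-1}^2/A+\varepsilon_t^2$ with $A=8(\tau^*)^2$, obtained from $(a-b)_+^2\ge a^2/2-b^2$.

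From there the paper actually runs the reciprocal telescoping argument that you say breaks. It uses $E_t\le E_{t-1}$ (exact line search makes the MSE nonincreasing) to get $1/E_t-1/E_{t-1}\ge 1/A-\varepsilon_t^2/E_{t-1}^2\ge 1/A-\varepsilon_t^2/E_t^2$. It then sums over $t$ and bounds every noise term by $\varepsilon_s^2/E_t^2$ using $E_s\ge E_t$. This gives the quadratic inequality $S_tX^2+X-B_t\ge 0$ in $X=1/E_t$, where $B_t=1/E_0+t/A$. Solving it with $\sqrt{1+z}\le 1+z/2$ yields $E_t\le 1/B_t+S_t$. So the additive noise does not defeat the reciprocal method; it only forces that quadratic closing step.

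Your induction instead splits at $A/t+S_{t-1}$ and uses that $y\mapsto y-y^2/A$ is increasing on $[0,A/2]$. This buys you three things:
\begin{itemize}
\item you never divide by $E_t$ (the paper tacitly needs $E_t>0$, though the case $E_t=0$ is trivial);
\item you use only nonnegativity of $E_t$, not its monotonicity;
\item you need no information about $E_0$, since the base case uses $\sup_x\,(x-x^2/A)=A/4$.
\end{itemize}
In exchange, the paper's route gives the slightly sharper intermediate bound $E_t\le (1/E_0+t/A)^{-1}+S_t$. That bound benefits from a good initialization and follows the standard template for $O(1/t)$ rates.
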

\begin{proof}
Let $E_t:=\mathrm{MSE}(f_t)-R\big(V(\mathcal C)\big)$. From Proposition~\ref{prop:gb_recurrence_span},
\[
  E_{t-1}-E_t\ \ge\ \Big(\tfrac{E_{t-1}}{2\tau^*}-\varepsilon_t\Big)_+^2.
\]
For any $a\ge 0$ and $b\in\R$, $(a-b)^2\ge a^2/2-b^2$. To see this, consider: $a^2-2ab+b^2-a^2/2+b^2$. We have that this quantity equals $a^2/2-2ab+2b^2$. Since a multiplicative factor of $2$ does not affect the sign, notice that twice this quantity is equal to $(a-2b)^2$ which is non-negative. In this case the inequality also holds for the quantity $((a-b)_+)^2$. Taking $a=E_{t-1}/(2\tau^*)$ and $b=\varepsilon_t$ yields
\[
  E_{t-1}-E_t\ \ge\ \frac{E_{t-1}^2}{8\,(\tau^*)^2}\ -\ \varepsilon_t^2.
\]
Since $E_t\le E_{t-1}$,
\[
  \frac{1}{E_t}-\frac{1}{E_{t-1}}\ =\ \frac{E_{t-1}-E_t}{E_tE_{t-1}}\ \ge\ \frac{E_{t-1}-E_t}{E_{t-1}^2}\ \ge\ \frac{1}{8\,(\tau^*)^2}\ -\ \frac{\varepsilon_t^2}{E_{t-1}^2}\ \ge\ \frac{1}{8\,(\tau^*)^2}\ -\ \frac{\varepsilon_t^2}{E_t^2}.
\]
Summing from $s=1$ to $t$ gives
\[
  \frac{1}{E_t}\ \ge\ \frac{1}{E_0}\ +\ \frac{t}{8\,(\tau^*)^2}\ -\ \sum_{s=1}^t\frac{\varepsilon_s^2}{E_s^2}\ge \frac{1}{E_0}\ +\ \frac{t}{8\,(\tau^*)^2}\ -\ \frac{1}{E_t^2}\sum_{s=1}^t\varepsilon_s^2.
\]
Let $A_t:=\sum_{s=1}^t\varepsilon_s^2$ and $B_t:=\frac{1}{E_0}+\frac{t}{8\,(\tau^*)^2}$. Writing $X:=1/E_t$, the inequality becomes $A_t X^2+X-B_t\ge0$. If $A_t=0$ then $X\ge B_t$ and $E_t\le 1/B_t\le 8(\tau^*)^2/t$. If $A_t>0$, define the quantity $Y=1/X$. Then, the inequality becomes $-B_tY^2+Y+A_t \geq 0$. Then the quadratic inequality implies $Y \le\frac{1+\sqrt{1+4A_tB_t}}{2B_t}$. Using $\sqrt{1+z}\le 1+z/2$ for $z\ge0$ gives
\[
  \frac{1}{X}\ \le\ \frac{1}{B_t}\ +\ A_t.
\]
Thus $E_t\le 1/B_t + A_t\le 8(\tau^*)^2/t+\sum_{s=1}^t\varepsilon_s^2$.
\end{proof}
We can now use the anchoring lemmas from Section~\ref{sec:prelims} to relate two independent stagewise runs.

\begin{theorem}[Gradient Boosting Agreement Bound]\label{thm:gb_two_run}
Let $f_1$ and $f_2$ be two independent gradient boosting runs (using the same weak learning class $\mathcal C$ and number of iterations $k$) driven by $\{\varepsilon_t\}$ and $\{\varepsilon'_t\}$ respectively. Let $f^*\in V(\mathcal C)$ denote the population least-squares predictor over $V(\mathcal C)$. Then
\[
  D(f_1,f_2)\ \le\ 2\big(\mathrm{MSE}(f_1)-R\big(V(\mathcal C)\big)\big)\ +\ 2\big(\mathrm{MSE}(f_2)-R\big(V(\mathcal C)\big)\big).
\]
Consequently, using Theorem \ref{thm:gb_rate_eps_span}, for all $k\ge 1$,
\[
  D(f_1,f_2)\ \le\ \frac{32\,(\tau^*)^2}{k}\ +\ 2\Big(\sum_{t=1}^k\varepsilon_t^2\ +\ \sum_{t=1}^k\varepsilon_t'^2\Big).
\]
\end{theorem}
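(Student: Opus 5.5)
The plan is to derive the first inequality directly from Corollary~\ref{lem:midpoint_anchor}, taking the anchoring class to be $\mathcal{H}=V(\mathcal C)$, and then to substitute the rate from Theorem~\ref{thm:gb_rate_eps_span}.

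For the first inequality, I first check that $f_1,f_2\in V(\mathcal C)$. Each run of Algorithm~\ref{alg:stagewise_gb} starts from $f_0\equiv 0$ and adds $\alpha_t g_t$ with $g_t\in\mathcal C$ at each step. Hence $f_k=\sum_{t=1}^k\alpha_t g_t$ is a finite linear combination of elements of $\mathcal C$, so $f_k\in V(\mathcal C)$. The same holds for the second run, even though it uses different oracle responses $g'_t$.

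Since $V(\mathcal C)$ is a linear subspace, the midpoint $\bar f=\tfrac12(f_1+f_2)$ also lies in $V(\mathcal C)$. Corollary~\ref{lem:midpoint_anchor} then gives exactly
\[
D(f_1,f_2)\le 2\big(\mathrm{MSE}(f_1)-R(V(\mathcal C))\big)+2\big(\mathrm{MSE}(f_2)-R(V(\mathcal C))\big).
\]
This step uses no independence and no coupling between the two runs. It only uses that both iterates lie in a common convex class, which is why the non-exchangeability of adaptively chosen weak learners does not matter here.

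For the rate, I apply Theorem~\ref{thm:gb_rate_eps_span} at $t=k$ to each run separately. For the first run,
\[
\mathrm{MSE}(f_1)-R(V(\mathcal C))\le \frac{8(\tau^*)^2}{k}+\sum_{t\le k}\varepsilon_t^2,
\]
and the analogous bound with $\varepsilon'_t$ holds for $f_2$. Both runs share the same $\mathcal C$, $y$ and distribution, so $\tau^*$, which depends only on $f^*$ and $\mathcal C$, is common to both. Multiplying each bound by $2$ and adding gives
\[
D(f_1,f_2)\le \frac{32(\tau^*)^2}{k}+2\Big(\sum_{t\le k}\varepsilon_t^2+\sum_{t\le k}\varepsilon_t'^2\Big),
\]
which is the stated bound.

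There is no real obstacle in this argument. The one point to state carefully is that $R(V(\mathcal C))$ is an infimum over the span, while $f_1,f_2$ are genuine finite combinations in that span, so $\mathrm{MSE}(\bar f)\ge R(V(\mathcal C))$ holds regardless of whether the infimum is attained. The existence of $f^*$ (needed to define $\tau^*$) is already assumed in Theorem~\ref{thm:gb_rate_eps_span}. If $\tau^*=\infty$, the bound is vacuous and there is nothing to prove.
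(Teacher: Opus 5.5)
Your proposal is correct and follows the paper's proof. Both arguments place the midpoint in $V(\mathcal C)$ because each run outputs a finite combination of weak learners, apply Corollary~\ref{lem:midpoint_anchor} with $\mathcal H=V(\mathcal C)$, and then invoke Theorem~\ref{thm:gb_rate_eps_span} at $t=k$ for each run, which gives the constant $32=2\cdot 8+2\cdot 8$ exactly as you compute.
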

\begin{proof}
Let $\bar f:=\tfrac{1}{2}(f_1+f_2)$. Since each gradient boosting run outputs a predictor in $V(\mathcal C)$, we have $\bar f\in V(\mathcal C)$. Applying Lemma~\ref{lem:midpoint_anchor} with $\mathcal H=V(\mathcal C)$ gives
\[
  D(f_1,f_2)\ \le\ 2\big(\mathrm{MSE}(f_1)-R(V(\mathcal C))\big)\ +\ 2\big(\mathrm{MSE}(f_2)-R(V(\mathcal C))\big).
\]
Applying Theorem~\ref{thm:gb_rate_eps_span} to both runs yields the stated bound.
\end{proof}

Thus we have shown that gradient boosting yields independent agreement tending to $0$ at a rate of $O(1/k)$, where $k$ is the number of iterations. This bound also depends on $\tau^*$, which is a problem-dependent constant. In Section \ref{sec:generalize} we analyze a variant of gradient boosting based on the Frank Wolfe algorithm (for more general loss functions) that always produces a predictor that has norm at most $\tau$, where $\tau$ is a user defined parameter. We give a variant of this analysis in which we anchor to the best model in the span of the weak learner class that also has norm at most $\tau$. This removes any dependence on $\tau^*$, and obtain similar rates depending only on $\tau$ --- replacing the problem dependent constant with a user defined parameter that trades of agreement with accuracy as desired.

\section{Neural Networks, Regression Trees, and Other Classes Satisfying Hierarchical Midpoint Closure}
\label{sec:composition}

Next, we show that certain function classes including ReLU neural networks and regression trees admit strong agreement bounds under approximate population loss minimization. These function classes may be highly non-convex, meaning that approximate loss minimizers may be very far in parameter space---or even incomparable in the sense that they may be of different architectures. Nevertheless, by anchoring on the midpoint predictor $\bar f(x)=\tfrac{1}{2}(f_1(x)+f_2(x))$ and using that the relevant model classes are closed under averaging 
we show that they must be close in \emph{prediction space}. 
We will use Lemma~\ref{lem:midpoint_local_curve} from Section~\ref{sec:prelims}. To apply it, we need midpoint closure of the form $\bar f\in \mathcal{F}_{2n}$ whenever $f_1,f_2\in\mathcal{F}_n$. The form of our theorems will be identical for any class satisfying this kind of ``hierarchical midpoint closure''. 

\subsection{Application to Neural Networks}
\label{sec:nn}
We work with feed-forward ReLU networks. Let $\sigma(t):=\max\{0,t\}$ denote the ReLU activation. For $n\ge 0$, let $\mathrm{NN}_n$ denote the class of functions $f:\cX\to\cY$ computable by a finite directed acyclic graph in which each internal (non-input, non-output) node computes $\sigma(\langle w,u\rangle+b)$ for some affine function of its inputs, and the output node computes an affine combination of the values at the input coordinates and internal nodes. First, we demonstrate midpoint closure for this class.

\begin{lemma}[Neural-network midpoint closure]
\label{lem:nn_midpoint_closure}
For every $n\ge 0$ and every $f_1,f_2\in\mathrm{NN}_n$, the midpoint predictor $\bar f:=\tfrac{1}{2}(f_1+f_2)$ lies in $\mathrm{NN}_{2n}$.
\end{lemma}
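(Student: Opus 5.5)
The plan is a disjoint-union (``parallel composition'') construction. Here $n$ counts internal ReLU nodes. Take a DAG $\mathcal{G}_1$ computing $f_1$ with at most $n$ internal nodes and a DAG $\mathcal{G}_2$ computing $f_2$ with at most $n$ internal nodes. We build a new DAG $\mathcal{G}$ with the following pieces:
\begin{itemize}
\item it shares the same input nodes (the coordinates of $x$);
\item its internal node set is the disjoint union of the internal nodes of $\mathcal{G}_1$ and of $\mathcal{G}_2$;
\item each copied internal node keeps its incoming edges, weights $w$ and bias $b$ exactly as in its original graph, so edges only run from inputs or from nodes of the same copy.
\end{itemize}
Since there are no edges between the two copies, $\mathcal{G}$ is acyclic. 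It has at most $2n$ internal nodes, each computing $\sigma(\langle w,u\rangle+b)$.

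The first step is to verify, by induction along a topological order of each copy, that every copied internal node computes on input $x$ exactly the same value $\sigma(\cdot)$ as the corresponding node in its original network. This holds because its inputs are the same input coordinates and the same (inductively identical) predecessor values.

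The second step defines the output node. Write the output of $\mathcal{G}_i$ as $f_i(x)=\langle a_i,x\rangle+\sum_{v}c^{(i)}_v h_v(x)+b_i$, where $h_v$ ranges over internal nodes of $\mathcal{G}_i$. The output node of $\mathcal{G}$ gets the affine combination with
\begin{itemize}
\item input weights $\tfrac12(a_1+a_2)$;
\item weight $\tfrac12 c^{(1)}_v$ on each node copied from $\mathcal{G}_1$ and $\tfrac12 c^{(2)}_v$ on each node copied from $\mathcal{G}_2$;
\item bias $\tfrac12(b_1+b_2)$.
\end{itemize}
This is still affine in the input coordinates and internal node values, so it is a legal output node. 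By the first step it evaluates to $\tfrac12 f_1(x)+\tfrac12 f_2(x)=\bar f(x)$ for every $x$, and hence $\bar f\in\mathrm{NN}_{2n}$.

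The main obstacle is pinning down the size convention: one must check that $\mathrm{NN}_n$ means ``at most $n$ internal nodes'', or pad with dummy nodes of zero output weight if exactly $n$ is required. With that settled, the union has at most $2n$ nodes and the classes are nested, which Lemma~\ref{lem:midpoint_local_curve} also needs. Because the output node is allowed an affine combination of input coordinates as well as internal nodes, no extra nodes are needed to carry skip connections or to perform the averaging. The case $n=0$ (affine functions) is immediate, since the average of two affine maps is affine.
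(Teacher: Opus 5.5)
Your proposal is correct and follows the paper's own argument: you take disjoint copies of the two internal computation graphs wired to the shared input, giving at most $2n$ ReLU nodes, and let the (non-internal) output node compute the average. Your explicit output coefficients $\tfrac12(a_1+a_2)$, $\tfrac12 c^{(i)}_v$, $\tfrac12(b_1+b_2)$ simply spell out why ``return $\tfrac12 f_1(x)+\tfrac12 f_2(x)$'' is a legal affine output node, which the paper leaves implicit.
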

\begin{proof}
Fix realizations of $f_1$ and $f_2$ as ReLU networks with at most $n$ internal nodes each. Construct a new network by taking a disjoint copy of the internal computation graph for each of $f_1$ and $f_2$, and wiring both copies to the same input $x$. This yields a single feed-forward network that computes both $f_1(x)$ and $f_2(x)$ in parallel, using at most $2n$ internal ReLU nodes.

Define the output node to return the affine combination $\tfrac{1}{2}f_1(x)+\tfrac{1}{2}f_2(x)$. This adds no new internal nodes, so the resulting network computes $\bar f$ and has size at most $2n$, i.e., $\bar f\in\mathrm{NN}_{2n}$.
\end{proof}

\begin{corollary}[Neural-network agreement]
\label{cor:nn_midpoint_agreement}
Fix $n\ge 1$ and $\varepsilon>0$. If $f_1,f_2\in\mathrm{NN}_n$ satisfy $\mathrm{MSE}(f_i)\le R(\mathrm{NN}_n)+\varepsilon$ for $i\in\{1,2\}$, then
\[
  D(f_1,f_2)
  \,\le\,
  4\big(R(\mathrm{NN}_n)-R(\mathrm{NN}_{2n})+\varepsilon\big).
\]
\end{corollary}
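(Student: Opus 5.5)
The corollary follows by instantiating Lemma~\ref{lem:midpoint_local_curve} with the hierarchy $\mathcal{F}_n := \mathrm{NN}_n$. Two hypotheses of that lemma must be checked: the classes are nested, and they are closed under midpoints from level $n$ to level $2n$. Once both are in place, the near-optimality assumption $\mathrm{MSE}(f_i)\le R(\mathrm{NN}_n)+\varepsilon$ is exactly the lemma's remaining hypothesis, and its conclusion is the displayed bound.

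\emph{Nestedness.} We have $\mathrm{NN}_n\subseteq \mathrm{NN}_{m}$ for $m\ge n$, since $\mathrm{NN}_n$ is defined as networks with \emph{at most} $n$ internal nodes. If one prefers an exact count, we can pad any network with dummy ReLU nodes whose output weight is zero; this leaves the computed function unchanged.

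\emph{Midpoint closure.} This is precisely Lemma~\ref{lem:nn_midpoint_closure}. Place disjoint copies of the two networks in parallel on the shared input, so that there are at most $2n$ internal nodes. The output node then takes the affine combination $\tfrac12 f_1(x)+\tfrac12 f_2(x)$. This works because each output is itself affine in the input coordinates and internal node values, so averaging the two output rules is again affine in the combined set of nodes.

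With these in hand, Corollary~\ref{lem:midpoint_anchor} applied with $\mathcal{H}=\mathrm{NN}_{2n}$ gives
\[
D(f_1,f_2)\le 2\big(\mathrm{MSE}(f_1)-R(\mathrm{NN}_{2n})\big)+2\big(\mathrm{MSE}(f_2)-R(\mathrm{NN}_{2n})\big).
\]
Substituting $\mathrm{MSE}(f_i)\le R(\mathrm{NN}_n)+\varepsilon$ yields $4\big(R(\mathrm{NN}_n)-R(\mathrm{NN}_{2n})+\varepsilon\big)$.

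There is no real obstacle. The only subtlety is that $R(\cdot)$ is defined as an infimum, which need not be attained over $\mathrm{NN}_{2n}$. This does not matter: the argument only uses $\mathrm{MSE}(\bar f)\ge R(\mathrm{NN}_{2n})$, and that holds for any member $\bar f$ of the class.
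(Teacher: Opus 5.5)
Your proposal is correct and is the paper's proof: you instantiate Lemma~\ref{lem:midpoint_local_curve} with $\mathcal{F}_n=\mathrm{NN}_n$ and use Lemma~\ref{lem:nn_midpoint_closure} for midpoint closure. Your explicit nestedness check (the paper leaves it implicit, and the lemma's proof never uses it) and your unrolled use of Corollary~\ref{lem:midpoint_anchor} just spell out the same argument in more detail.
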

\begin{proof}
Apply Lemma~\ref{lem:midpoint_local_curve} with $\mathcal{F}_n=\mathrm{NN}_n$ and use Lemma~\ref{lem:nn_midpoint_closure}.
\end{proof}

Observe that this is exactly the same form of local learning curve guarantee that we got for Stacking in Theorem \ref{thm:main_stack}. In particular, as loss is bounded and optimal loss is monotonically decreasing in network size, for any value of $\alpha$, there must be a value of $n \leq 2^{1/\alpha}$ such that $R(\mathrm{NN}_n)-R(\mathrm{NN}_{2n}) \leq \alpha$ (as error can drop by $\alpha$ at most $1/\alpha$ times before contradicting the non-negativity of squared error). For such a value of $n$, we have $D(f_1,f_2) \leq 4(\alpha + \epsilon)$. As with stacking, this bound is completely independent of the complexity of the instance and does not require that ``global optimality'' can be obtained by a small neural network (i.e. it requires only flatness of the local loss curve, which can always be guaranteed at modest values of $n$, not the global loss curve, which cannot). This kind of ``learning curve'' bound for neural networks is reminiscent of the argument used by \cite{blasiok2024loss} to show that ``most sizes'' of ReLU networks are approximately multicalibrated with respect to all neural network architectures of bounded size.

\subsection{Application to Regression Trees}
We observe that the same arguments apply almost verbatim to regression trees. We work with axis-aligned regression trees. A depth-$d$ tree is a rooted binary tree in which every internal node is labeled by a coordinate $j\in[d]$ and a threshold $t\in\R$, and routes an input $x\in\cX\subseteq\R^d$ to the left or right child depending on whether $x_j\le t$ or $x_j>t$. Each leaf is labeled by a constant prediction value in $[0,1]$. The predictor computed by the tree is the leaf value reached by $x$. We write $\mathsf{Tree}_d$ for the class of such predictors of depth at most $d$.

\begin{lemma}[Regression-tree midpoint closure]
\label{lem:tree_midpoint_closure}
For every $d\ge 0$ and every $f_1,f_2\in\mathsf{Tree}_d$, the midpoint predictor $\bar f:=\tfrac{1}{2}(f_1+f_2)$ lies in $\mathsf{Tree}_{2d}$.
\end{lemma}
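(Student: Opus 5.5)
The plan is to build the tree for $\bar f$ by \emph{grafting}: start from a tree $T_1$ of depth at most $d$ that computes $f_1$, and replace each of its leaves by a modified copy of a tree $T_2$ of depth at most $d$ that computes $f_2$. Concretely, let $\ell$ be a leaf of $T_1$ with value $a_\ell$. Take a copy of $T_2$ with the same splits (coordinates and thresholds), and relabel each leaf $\ell'$ of the copy, originally labeled $b_{\ell'}$, with $\tfrac12(a_\ell + b_{\ell'})$. Attach this copy in place of $\ell$. The result is again a rooted binary tree with axis-aligned splits. Every root-to-leaf path consists of a path in $T_1$ (length at most $d$) followed by a path in $T_2$ (length at most $d$), so its depth is at most $2d$.

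Correctness comes next. Fix $x$, and let $\ell(x)$ be the leaf of $T_1$ that $x$ reaches, so that $f_1(x)=a_{\ell(x)}$. In the composite tree, $x$ follows exactly the same routing decisions through the top part, because the splits are unchanged. It therefore arrives at the root of the copy of $T_2$ grafted at $\ell(x)$. Inside that copy the splits are identical to those of $T_2$, so $x$ reaches the leaf $\ell'(x)$ that it would reach in $T_2$, where $f_2(x)=b_{\ell'(x)}$. The output is then $\tfrac12(a_{\ell(x)}+b_{\ell'(x)})=\bar f(x)$.

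It remains to check the leaf-value constraint. Since $a_\ell,b_{\ell'}\in[0,1]$ and $[0,1]$ is convex, each averaged value lies in $[0,1]$. Hence the composite tree belongs to $\mathsf{Tree}_{2d}$. The degenerate case $d=0$ (both trees constant) is handled by the same construction, giving a constant leaf $\tfrac12(a+b)$ of depth $0\le 2d$.

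There is no real obstacle here. The only points needing care are that the grafting preserves routing (splits are a deterministic function of $x$ alone, with no dependence on the path taken) and that the leaf range condition $[0,1]$ is closed under averaging. If the paper's convention for depth counts edges rather than levels, the bound $d+d=2d$ still holds under either convention, since the two path lengths add.
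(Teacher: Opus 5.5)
Your proposal is correct and is the paper's argument: graft a copy of the $f_2$ tree at each leaf of the $f_1$ tree, note that every root-to-leaf path has at most $d+d=2d$ splits, and label each new leaf with $\tfrac12(a_\ell+b_{\ell'})\in[0,1]$. Your explicit routing check and your treatment of the $d=0$ case fill in details the paper leaves implicit.
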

\begin{proof}
Fix realizations of $f_1,f_2\in\mathsf{Tree}_d$ as depth-$d$ trees. Consider the partition of $\cX$ induced by the leaves of the tree for $f_1$; on each cell of this partition, $f_1$ is constant. Now refine each such cell further using the splits of the tree for $f_2$ restricted to that cell.

Equivalently, we can construct a single tree as follows: take the tree for $f_1$, and at each leaf, graft a copy of the tree for $f_2$. Along any root-to-leaf path, we traverse at most $d$ splits from $f_1$ and then at most $d$ splits from $f_2$, so the resulting tree has depth at most $2d$. Moreover, on each leaf of the resulting tree, both $f_1$ and $f_2$ take constant values, so we can label that leaf with their average $\tfrac{1}{2}f_1(x)+\tfrac{1}{2}f_2(x)\in[0,1]$. This yields a depth-$2d$ regression tree computing $\bar f$, i.e., $\bar f\in\mathsf{Tree}_{2d}$.
\end{proof}

We now get an immediate corollary:

\begin{corollary}[Regression tree agreement]
\label{cor:tree_midpoint_agreement}
Fix $d\ge 1$ and $\varepsilon>0$. If $f_1,f_2\in\mathsf{Tree}_d$ satisfy $\mathrm{MSE}(f_i)\le R(\mathsf{Tree}_d)+\varepsilon$ for $i\in\{1,2\}$, then
\[
  D(f_1,f_2)
  \,\le\,
  4\big(R(\mathsf{Tree}_d)-R(\mathsf{Tree}_{2d})+\varepsilon\big).
\]
\end{corollary}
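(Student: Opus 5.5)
This is a direct instance of Lemma~\ref{lem:midpoint_local_curve}, mirroring the proof of Corollary~\ref{cor:nn_midpoint_agreement}. I would take the hierarchy $\mathcal{F}_n := \mathsf{Tree}_n$, indexed by depth, and verify the lemma's two hypotheses.

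The first hypothesis is nestedness. I need $\mathsf{Tree}_d \subseteq \mathsf{Tree}_{d'}$ for $d \le d'$, and this is immediate because $\mathsf{Tree}_d$ is defined as trees of depth \emph{at most} $d$. It also gives monotonicity of $R(\mathsf{Tree}_d)$ in $d$, although the lemma does not need it.

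The second hypothesis is hierarchical midpoint closure. Lemma~\ref{lem:tree_midpoint_closure} supplies exactly this: for $f_1, f_2 \in \mathsf{Tree}_d$, the average $\bar f = \tfrac12(f_1+f_2)$ lies in $\mathsf{Tree}_{2d}$. The construction grafts a copy of the $f_2$ tree at every leaf of the $f_1$ tree and labels each new leaf with the average of the two constant values. That label stays in $[0,1]$, so the leaf-range restriction is respected.

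With both hypotheses in hand, I would apply Lemma~\ref{lem:midpoint_local_curve} with $n = d$ and the given $\varepsilon$. Using $\mathrm{MSE}(f_i) \le R(\mathsf{Tree}_d) + \varepsilon$, the lemma yields
\[
D(f_1,f_2) \le 4\big(R(\mathsf{Tree}_d) - R(\mathsf{Tree}_{2d}) + \varepsilon\big).
\]

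There is no substantive obstacle. The only point to be careful about is that the midpoint lands in a class indexed by $2d$, matching the form $\mathcal{F}_{2n}$ in the lemma, and that the leaf-value constraint does not break closure. Both points are already handled in Lemma~\ref{lem:tree_midpoint_closure}.
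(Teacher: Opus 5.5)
Your proposal is correct and is the paper's proof: apply Lemma~\ref{lem:midpoint_local_curve} with $\mathcal{F}_d=\mathsf{Tree}_d$, using Lemma~\ref{lem:tree_midpoint_closure} for midpoint closure. Your explicit checks of nestedness and of the $[0,1]$ leaf-value constraint are details the paper leaves implicit, and both are fine.
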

\begin{proof}
Apply Lemma~\ref{lem:midpoint_local_curve} with $\mathcal{F}_d=\mathsf{Tree}_d$ and use Lemma~\ref{lem:tree_midpoint_closure}.
\end{proof}

Again, this is a local learning curve agreement guarantee of exactly the same form as our theorem for Stacking (Theorem \ref{thm:main_stack}) and our theorem for neural network training (Corollary \ref{cor:nn_midpoint_agreement}). An immediate implication is that for any value of $\alpha$ that there is a value $d \leq 2^{1/\alpha}$ (i.e. independent of the complexity of the instance) guaranteeing that for that value of $d$, $D(f_1,f_2) \leq 4(\alpha + \epsilon)$.

\section{Generalization to Multi-Dimensional Strongly Convex Losses}
\label{sec:generalize}

In this section we generalize our setting to study models that output $d$-dimensional distributions as predictions, and optimize arbitrary strongly convex losses. We show that the midpoint anchoring argument extends directly to this more general setting, which lets us model a wide array of practical machine learning problems. First we define general strongly convex loss functions over $d$ dimensional predictions:

\begin{definition}[Strongly convex losses]\label{def:strong-convexity}
    Let $\loss: \cY \times \R^d \rightarrow \R$ be a continuously differentiable loss function. We say that $\loss$ is \emph{$\mu$-strongly convex} if there exists some $\mu > 0$ such that for every $y\in \cY$, $P_1, P_2 \in \R^d$, 
    \[\loss(y, P_1) \geq \loss(y, P_2) + \langle \nabla_p \loss(y,P_2), P_1 - P_2 \rangle + \tfrac{\mu}{2}\|P_1 - P_2\|_2^2.\]
\end{definition}

For predictors outputting $d$-dimensional predictions, we define disagreement as follows, straightforwardly generalizing our $1$-dimensional expected squared disagreement metric: 

\begin{definition}[Generalized disagreement]\label{def:gen-disagreement}
    Let $P$ be a distribution on $\cX\times \cY$ and let $f_1, f_2:\cX \rightarrow \R^d$ be functions. The disagreement between $f_1, f_2$ over $P$ is the expected squared Euclidean distance between their predictions:  
    \[D(f_1, f_2) = \E[\|f_1(x) - f_2(x)\|_2^2].\]
\end{definition}

We will write $R(f):=\E[\loss(y,f(x))]$. We can now generalize our disagreement-via-midpoint-anchoring lemma which drives our analyses. 

\begin{lemma}[Disagreement via the midpoint anchor]
\label{lem:midpoint_anchor_sc}
Assume $\loss$ is $\mu$-strongly convex. For any two functions $f_1,f_2:\cX\to\R^d$, let $\bar f(x):=\tfrac{1}{2}(f_1(x)+f_2(x))$. Then
\[
  D(f_1,f_2)
  \,\le\,
  \tfrac{4}{\mu}\Big(R(f_1)+R(f_2)-2R(\bar f)\Big).
\]
In particular, if $\bar f\in\mathcal{H}$ for some class of predictors $\mathcal{H}$, then
\[
  D(f_1,f_2)
  \,\le\,
  \tfrac{4}{\mu}\big(R(f_1)-R(\mathcal{H})\big)
  +\tfrac{4}{\mu}\big(R(f_2)-R(\mathcal{H})\big).
\]
\end{lemma}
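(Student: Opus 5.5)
Fix $x$ and $y$ and write $P_1=f_1(x)$, $P_2=f_2(x)$, $\bar P=\tfrac12(P_1+P_2)$. The plan is to prove a pointwise ``strong-convexity midpoint inequality'' and then take expectations over $(x,y)\sim P$.

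First, apply Definition~\ref{def:strong-convexity} twice, both times expanding around the midpoint $P_2\leftarrow\bar P$, once with $P_1$ and once with $P_2$ in the first slot:
\[
\loss(y,P_1)\ge \loss(y,\bar P)+\langle\nabla_p\loss(y,\bar P),P_1-\bar P\rangle+\tfrac{\mu}{2}\|P_1-\bar P\|_2^2,
\]
\[
\loss(y,P_2)\ge \loss(y,\bar P)+\langle\nabla_p\loss(y,\bar P),P_2-\bar P\rangle+\tfrac{\mu}{2}\|P_2-\bar P\|_2^2.
\]
Adding them, the gradient terms cancel because $(P_1-\bar P)+(P_2-\bar P)=0$. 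Since $\|P_i-\bar P\|_2^2=\tfrac14\|P_1-P_2\|_2^2$ for each $i$, we get
\[
\loss(y,P_1)+\loss(y,P_2)-2\loss(y,\bar P)\ \ge\ \tfrac{\mu}{4}\|P_1-P_2\|_2^2 .
\]
Taking expectations over $(x,y)\sim P$ and using linearity gives $R(f_1)+R(f_2)-2R(\bar f)\ge \tfrac{\mu}{4}D(f_1,f_2)$. Rearranged, this is the first claim.

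For the second claim, argue exactly as in Corollary~\ref{lem:midpoint_anchor}. If $\bar f\in\mathcal H$, then $R(\bar f)\ge R(\mathcal H):=\inf_{f\in\mathcal H}R(f)$. Substituting $-2R(\bar f)\le -2R(\mathcal H)$ and splitting the $-2R(\mathcal H)$ between the two terms yields the stated bound.

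There is no real obstacle here. The only points needing care are that the expectations are well defined (assume $R(f_1),R(f_2)<\infty$, otherwise the bound is vacuous) and that the gradient cancellation uses the same base point $\bar P$ in both inequalities. As a sanity check, squared loss $\loss(y,P)=\|y-P\|_2^2$ has $\mu=2$ and holds with equality, recovering Lemma~\ref{lem:midpoint_identity}.
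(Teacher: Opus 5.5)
Your proposal is correct and follows essentially the same argument as the paper. Both apply strong convexity twice around the midpoint $\bar P$, cancel the gradient terms, use $\|P_1-\bar P\|_2^2+\|P_2-\bar P\|_2^2=\tfrac12\|P_1-P_2\|_2^2$ to get the pointwise bound, take expectations, and then use $R(\bar f)\ge R(\mathcal{H})$ for the second claim. Your sanity check that squared loss with $\mu=2$ recovers Lemma~\ref{lem:midpoint_identity} with equality is also correct.
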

\begin{proof}
Fix any $x\in\cX$ and $y\in\cY$ and abbreviate
\[
  p_1:=f_1(x),\quad p_2:=f_2(x),\quad \bar p:=\bar f(x)=\tfrac{1}{2}(p_1+p_2).
\]
Applying $\mu$-strong convexity (Definition~\ref{def:strong-convexity}) with $P_1=p_1$ and $P_2=\bar p$ gives
\[
  \loss(y,p_1)
  \ge \loss(y,\bar p) + \langle \nabla_p \loss(y,\bar p),\, p_1-\bar p\rangle + \tfrac{\mu}{2}\|p_1-\bar p\|_2^2.
\]
Similarly, with $P_1=p_2$ and $P_2=\bar p$,
\[
  \loss(y,p_2)
  \ge \loss(y,\bar p) + \langle \nabla_p \loss(y,\bar p),\, p_2-\bar p\rangle + \tfrac{\mu}{2}\|p_2-\bar p\|_2^2.
\]
Adding the two inequalities, and using $(p_1-\bar p)+(p_2-\bar p)=p_1+p_2-2\bar p=0$ to cancel the gradient terms, yields
\[
  \loss(y,p_1)+\loss(y,p_2)
  \ge 2\loss(y,\bar p) + \tfrac{\mu}{2}\Big(\|p_1-\bar p\|_2^2+\|p_2-\bar p\|_2^2\Big).
\]
Since $p_1-\bar p=\tfrac{1}{2}(p_1-p_2)$ and $p_2-\bar p=\tfrac{1}{2}(p_2-p_1)$, we have
\[
  \|p_1-\bar p\|_2^2+\|p_2-\bar p\|_2^2
  = 2\Big\|\tfrac{1}{2}(p_1-p_2)\Big\|_2^2
  = \tfrac{1}{2}\|p_1-p_2\|_2^2.
\]
Substituting this back and rearranging gives the pointwise bound
\[
  \|f_1(x)-f_2(x)\|_2^2
  \le \tfrac{4}{\mu}\Big(\loss(y,f_1(x))+\loss(y,f_2(x)) - 2\loss\big(y,\bar f(x)\big)\Big).
\]
Taking expectations over $(x,y)\sim P$ and using the definitions of $D(\cdot,\cdot)$ and $R(\cdot)$ yields
\[
  D(f_1,f_2)
  \le \tfrac{4}{\mu}\Big(R(f_1)+R(f_2)-2R(\bar f)\Big).
\]
For the second inequality, if $\bar f\in\mathcal{H}$ then $R(\bar f)\ge R(\mathcal{H})$, so substituting $R(\bar f)$ by $R(\mathcal{H})$ in the right-hand side yields the claim.
\end{proof}

We now show how to apply the midpoint anchoring lemma to each of our (generalized) applications.
\subsection{Stacking}
Here, we will provide a generalization of our stacking results to multi-dimensional strongly-convex losses. We once again model ``base models'' as being sampled i.i.d. from an arbitrary distribution $Q$, and under two independent training runs write  $G,G' \sim Q^k$ to denote the set of $k$ sampled models. We will consider the stacked predictors $f_1 \in V(G)$ and $f_2 \in V(G')$. Define $G^* = G \cup G'$. The key observation is that the midpoint predictor $\tfrac{1}{2}(f_1+f_2)$ lies in $V(G^*)$, so we can apply Lemma~\ref{lem:midpoint_anchor_sc} and then use the same exchangeability argument as in the single-dimensional case.
\begin{theorem}(Agreement for Stacked Aggregation Generalization)\label{thm:agreement_stacking_generalization}
    Assume that $\loss$ is $\mu$-strongly convex. Let $G=\{g_1,\dots,g_k\}\stackrel{\text{i.i.d.}}{\sim}Q^k$ and $G'= \{g'_1,\dots,g'_k\}\stackrel{\text{i.i.d.}}{\sim}Q^k$ be independent. 
Define $f_1,f_2$ as follows: 
\[f_1 = \arg \min_{f \in V(G)} \E[\loss(y,f(x))], \quad  f_2 = \arg \min_{f \in V(G')} \E[\loss(y,f(x))]\]
Then we have that 

\[
\E_{f_1,f_2}\big[D(f_1,f_2)] \;\le\; \frac{8}{\mu}\big(\bar R_k - \bar R_{2k}\big).
\] 
\end{theorem}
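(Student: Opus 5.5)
The proof mirrors Theorem~\ref{thm:main_stack}, with Lemma~\ref{lem:midpoint_anchor_sc} taking the place of Lemma~\ref{lem:midpoint_anchor}. Here $R(G)$ means $\inf_{f\in V(G)}\E[\loss(y,f(x))]$, where $V(G)$ is the set of linear combinations $\sum_i a_i g_i$ of the $\R^d$-valued base models. Write $R_t$ and $\bar R_t=\E[R_t]$ for the analogous random and expected quantities with $t$ i.i.d.\ draws. Minimizers need not exist for a general strongly convex loss on a finite-dimensional span. If they do not, I will work with $\delta$-approximate minimizers and let $\delta\to 0$ at the end; existence actually follows from coercivity of strongly convex losses. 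I treat this as a technicality.

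\textbf{Step 1 (pointwise bound).} Fix realizations of $G$ and $G'$, and let $G^*=G\cup G'$ as a multiset. Then $f_1\in V(G)\subseteq V(G^*)$ and $f_2\in V(G')\subseteq V(G^*)$. Since $V(G^*)$ is a linear space, the midpoint $\bar f=\tfrac12(f_1+f_2)$ also lies in $V(G^*)$. Applying Lemma~\ref{lem:midpoint_anchor_sc} with $\mathcal H=V(G^*)$, and using $R(f_1)=R(G)$, $R(f_2)=R(G')$, and $R(V(G^*))=R(G^*)$, gives
\[
D(f_1,f_2)\le \tfrac{4}{\mu}\big(R(G)-R(G^*)\big)+\tfrac{4}{\mu}\big(R(G')-R(G^*)\big).
\]

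\textbf{Step 2 (exchangeability).} Condition on the multiset $G^*$. Because the $2k$ draws are i.i.d.\ from $Q$, the pair $(G,G')$ is a uniformly random split of $G^*$ into two halves of size $k$. So $G$ and $G'$ have the same conditional law, and
\[
\E[D(f_1,f_2)\mid G^*]\le \tfrac{8}{\mu}\big(\E[R(G)\mid G^*]-R(G^*)\big).
\]
Now take the outer expectation, exactly as in the proof of Theorem~\ref{thm:main_stack}. First, $\E_{G^*}[R(G^*)]=\bar R_{2k}$, because $G^*$ consists of $2k$ i.i.d.\ draws. Second, $\E\big[\E[R(G)\mid G^*]\big]=\bar R_k$, because a uniformly random $k$-subset $U$ chosen independently of $2k$ i.i.d.\ draws is itself a set of $k$ i.i.d.\ draws. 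Combining these yields $\E[D(f_1,f_2)]\le \tfrac{8}{\mu}(\bar R_k-\bar R_{2k})$.

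\textbf{Main obstacle.} The only real issue is measurability and well-definedness of $f_1,f_2$ and of $R(G)$ as functions of the sampled models. The conditioning argument itself carries over verbatim, since it never uses the form of the loss; only Step~1 changes. I would argue existence of minimizers by coercivity: strong convexity gives quadratic growth in prediction space, so any minimizing sequence stays bounded in the finite-dimensional coefficient quotient space. I would also note that, by the pointwise bound, the strict convexity of $R$ makes $f_1$ unique as a function, even if its coefficient vector is not unique.
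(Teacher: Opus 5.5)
Your proposal is correct and follows the paper's proof essentially line for line: you apply Lemma~\ref{lem:midpoint_anchor_sc} with $\mathcal H = V(G^*)$ to get the pointwise bound, then use the same exchangeability and conditioning argument to obtain $\E[R(G)]=\bar R_k$ and $\E[R(G^*)]=\bar R_{2k}$. Your remarks on existence and uniqueness of the minimizers, which the paper simply assumes, are a harmless refinement, though the coercivity argument implicitly needs a mild integrability condition such as $\E\|\nabla_p\loss(y,0)\|_2^2<\infty$.
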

\begin{proof}
    Fix realizations of $G$ and $G'$, and let $G^*=G\cup G'$ (multiset union
). Throughout this section we will think of $G,G' \sim Q^k$, unless explicitly conditioned. Note that $V(G)\subseteq V(G^*)$ and $V(G')\subseteq V(G^*)$. In our proofs, without loss of generality, we will use the notation $h_G$ to denote the  minimizer of $\E[\loss(y,\cdot)]$ with respect to subspace $G$. Similarly, we will use the notation $R(G) = R(h_G)$ in this context. In our theorem statements, this corresponds to $f_1$.
 Let $\bar h:=\tfrac{1}{2}(h_G+h_{G'})$. Since $h_G\in V(G)$ and $h_{G'}\in V(G')$ and $V(G),V(G')\subseteq V(G^*)$, we have $\bar h\in V(G^*)$. Applying Lemma~\ref{lem:midpoint_anchor_sc} with $f_1=h_G$, $f_2=h_{G'}$, and $\mathcal{H}=V(G^*)$, and using $R(h_G)=R(G)$, $R(h_{G'})=R(G')$, and $R(V(G^*))=R(G^*)$, we have the pointwise inequality
\begin{equation}\label{eq:point-wise_md}
\|h_G - h_{G'}\|^2 \;\le\; \frac{4}{\mu}\big(R(G)-R(G^*)\big) + \frac{4}{\mu}\big(R(G')-R(G^*)\big).
\end{equation}

We now take expectations over $G, G', G^*$ to relate the two terms on the RHS of Equation \ref{eq:point-wise_md}. Conditional on $G^*$, we can generate the pair $(G,G')$ by drawing a uniformly random permutation $\pi$ of $\{1,\dots,2k\}$ and letting $G$ be the first $k$ permuted elements of $G^*$ and $G'$ the remaining $k$. This holds because the $2k$ features in $G^*$ arise from $2k$ i.i.d.\ draws from $Q$ and the joint law of $(G,G')$ is exchangeable under permutations of these $2k$ draws. Conditioning on the unordered multiset $G^*$, $(G,G')$ is a uniformly random partition into two $k$-submultisets. Therefore, taking the conditional expectation of \eqref{eq:point-wise_md} given $G^*$ and using symmetry of $G$ and $G'$,
\begin{equation}\label{eq:condit_md}
\E_{(G,G')|G^*}\big[\,\|h_G - h_{G'}\|^2 \,\big|\, G^* \big] \;\le\; \frac{8}{\mu}\Big(\E_{(G,G')|G^*}\big[R(G)\,\big|\,G^*\big] - R(G^*)\Big).
\end{equation}
We now integrate \eqref{eq:condit_md} over $G^*$. We claim that
\begin{equation}\label{eq:average_md}
\E_{G^*}\Big[\E_{(G,G')|G^*}\big[R(G)\,\big|\,G^*\big]\Big] \;=\; \bar R_k
\qquad\text{and}\qquad
\E_{G^*}\big[R(G^*)\big] \;=\; \bar R_{2k}.
\end{equation}
The second equality is immediate from the definition of $\bar R_{2k}$, since $G^*$ is a collection of $2k$ i.i.d.\ draws from $Q$. For the first equality in \eqref{eq:average_md}, let $U$ be a uniformly random $k$-subset of $\{1,\dots,2k\}$ independent of the draws $\{g_1,\dots,g_{2k}\}\stackrel{\text{i.i.d.}}{\sim}Q^{2k}$. Define $G_U:=\{g_i\}_{i\in U}$. By the conditional description above, 
\[
\E_{(G,G')|G^*}\big[R(G)\,\big|\,G^*\big] \;=\; \E_U\big[R(G_U)\,\big|\,G^*\big].
\]

\[
\E_{G^*}\Big[\E_{(G,G')|G^*}\big[R(G)\,\big|\,G^*\big]\Big] = \E_{G^*}\Big[\E_U\big[R(G_U)\,\big|\,G^*\big]\Big]
= \E_{G^*,U}\big[ R(G_U) \big].
\]
For any fixed $U$, the subcollection $\{g_i\}_{i\in U}$ consists of $k$ i.i.d.\ draws from $Q$ (since the full family is i.i.d.\ and $U$ is independent of the draws), hence averaging over $U$ yields $\E_{G^*,U}[R(G_U)] = \bar R_k$, proving \eqref{eq:average_md}.

Finally, taking expectations in \eqref{eq:condit_md} and substituting \eqref{eq:average_md} gives
\[
\E_{G,G'}\big[\,\|h_G - h_{G'}\|^2\,\big] \;\le\; \frac{8}{\mu}\big(\bar R_k - \bar R_{2k}\big),
\]
which is the desired bound.
\end{proof}

As before, we have related the stability of (now generalized) stacking to the local learning curve, which is bounded, non-negative, and non-increasing in $k$. As a result for any desired level of stability $\alpha$, there must be a $k \leq 2^{O(1/\alpha})$ that guarantees that level of stability, independently of the complexity of the learning instance --- and once again the local learning curve can be empirically investigated on a holdout set to choose such a value of $k$. 

\subsection{Gradient Boosting (via Frank Wolfe)} 

In this section, we  generalize our gradient boosting agreement results to the multi-dimensional setting. Along the way we give another generalization as well. Recall that in the final risk bound of Theorem \ref{thm:gb_two_run}, and correspondingly in the final agreement bound, we had a  dependence on the instance-dependent constant  $\tau^*$, the atomic norm of the best model in the span of the weak learner class. In this section, we instead analyze a Frank-Wolfe variant of  gradient boosting. In this variant, the  iterates are constrained to lie within a user-specified atomic-norm budget $\tau$. As a result we are able to carry out our anchoring argument with respect to the best norm $\tau$ model in the span of the weak learner class, rather than the best unconstrained model. This lets us replace the dependence on $\tau^*$ with a dependence on $\tau$, which is specified by the user rather than defined by the instance.  In this section we will need to work with $L-$smooth losses (in the prediction  $p$). In other words we need to assume that for all $y \in \cY$ and all $p_1,p_2 \in \Delta(\mathcal{Y})$ that our loss satisfies: 
\[||\nabla_p \loss(y,p_1)-\nabla_p \loss(y,p_2)||_2 \leq L ||p_1-p_2||_2.\]

 Recall the conditions of the weak learner class $\mathcal{C}$ that we had previously, which we continue to assume in this section: symmetry, normalization, and non-degeneracy. Note in this section, for the sake of clarity, we will use the standard inner product $\langle f,g \rangle=f^Tg$. When needed, we will explicitly mention the expectations we are computing. When the norms are marked $||f||$, we still take it to mean the same definition as in Preliminaries of $(\E[f(x)^2])^{1/2}.$

\begin{algorithm}[t]
\caption{Multi-Dimensional Frank--Wolfe}
\label{alg:multidim_fw}
\DontPrintSemicolon
\SetAlgoNoLine

\textbf{Input:} SQ-oracle for weak learner class $\mathcal C$, budget $\tau>0$\;

$f_0 \equiv 0$, $G_0=\emptyset$\;

\For{$t\in[k]$}{
Choose $s_t\in\mathcal C$ such that
$\E[\langle -\nabla_p\mathcal L(y,f_{t-1}(x)),s_t(x)\rangle]
\ge \max_{s\in\mathcal C}\E[\langle -\nabla_p\mathcal L(y,f_{t-1}(x)),s(x)\rangle]-\varepsilon_t$\;

Choose $g_t\in\mathcal K_\tau$ such that
$g_t=\tau s_t/\|s_t\|_{\mathcal A}$\;

$\alpha_t=\frac{2}{t+1}$\;

$f_t:=f_{t-1}+\alpha_t(g_t-f_{t-1})$;
$G_t:=G_{t-1}\cup\{g_t\}$\;
}
\Return $f_k$ and $G:=G_k$\;
\end{algorithm}

We will define the quantity \[
  M(f)\ :=\ \sup_{g\in\mathcal{C}}\ \big|\E[\langle \nabla_p\loss(y,f(x)),g(x)\rangle]\big|.
\]
We will also define the closely related quantity 

\[G(f) := \sup_{z \in \mathcal{K}_\tau}\E[\langle \nabla_p \loss(y,f(x)),f(x)-z(x)\rangle].\]
We can show that for any $f \in \mathcal{K}_\tau$, $G(f) \leq 2\tau M(f)$. Define $\tilde{f}(x),\tilde{g}(x) \in \mathrm{conv}(\mathcal{C})$, where $f(x) = \tau \tilde{f}(x)$ and $g(x) = \tau \tilde{g}(x)$. One can see this (as shown below) because the weak learner class is normalized, functions in $\mathcal{K}_\tau$ can be scaled up to live in $\mathrm{conv}(\mathcal{C})$, the inside inner product term for $M(f)$ is linear in $g$ and therefore the supremum over $\mathrm{conv}(\mathcal{C})$ matches the supremum over $\mathcal{C}$, and triangle inequality.

\begin{align*}
    G(f) &= \sup_{z \in \mathcal{K}_\tau}|\E[\langle \nabla \loss(y,f(x)),f(x)-z(x)\rangle|]\\
    &= \tau  \sup_{\tilde{z} \in \mathrm{conv}(\mathcal{C})}| \E[\langle \nabla \loss(y,f(x)),\tilde{f}(x)-\tilde{z}(x)\rangle|] \\
    & \leq \tau | \E[\langle \nabla \loss(y,f(x)),\tilde{f}(x)\rangle]|+\tau \sup_{\tilde{z} \in \mathrm{conv}(\mathcal{C})}| \E[\langle \nabla \loss(y,f(x)),\tilde{z}(x)\rangle|] \\
    &\leq \tau\sup_{\tilde{h} \in \mathrm{conv}(\mathcal{C})} | \E[\langle \nabla \loss(y,f(x)),\tilde{h}(x)\rangle]|+\tau \sup_{\tilde{z} \in \mathrm{conv}(\mathcal{C})}| \E[\langle \nabla \loss(y,f(x)),\tilde{z}(x)\rangle|] \\
    &=2\tau\sup_{\tilde{h} \in \mathcal{C}} | \E[\langle \nabla \loss(y,f(x)),\tilde{h}(x)\rangle]| \\
    &= 2\tau M(f)
\end{align*}

Broadly, our proof  will mirror the analysis of our single-dimensional agreement results for gradient boosting. We will once again  make use of the conditions on the weak learner class mentioned for gradient  boosting of symmetry, normalization, and non-degeneracy. Also note that we can define $G(f)$ with the absolute value due to symmetry of our class, similar to the argument provided in the gradient boosting section. First, we will lower bound the difference of two iterate's losses. This will give us a lower bound on the progress our algorithm's model is making on a per-iterate basis.  

\begin{lemma}[FW single-iterate progress]\label{lem:fw_single}
Assume $\loss$ is $L$-smooth in the second argument. Let $d_t=g_t-f_{t-1}$ with $\|d_t\|_2\le 2\tau$. Then with the oracle above we get that, 

   \[R(f_{t-1})-R(f_t) \geq \alpha_t(G(f_{t-1})-\tau \varepsilon_t)-2L\tau^2\alpha_t^2\]
\end{lemma}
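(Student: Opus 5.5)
The plan is the standard Frank--Wolfe descent argument: apply the smoothness upper bound along the direction $d_t$, then control the linear term using the oracle's approximate-maximizer guarantee. First, pointwise in $(x,y)$, $L$-smoothness of $\loss(y,\cdot)$ gives the quadratic upper bound
\[
\loss(y,f_t(x)) \le \loss(y,f_{t-1}(x)) + \alpha_t\langle \nabla_p\loss(y,f_{t-1}(x)), d_t(x)\rangle + \tfrac{L}{2}\alpha_t^2\|d_t(x)\|_2^2 ,
\]
since $f_t = f_{t-1}+\alpha_t d_t$. Taking expectations over $(x,y)\sim P$ and using $\|d_t\|\le 2\tau$ (read as the $L^2(P)$ norm of the vector-valued function, which is what the integrated quadratic term requires) bounds the curvature term by $\tfrac{L}{2}\alpha_t^2(2\tau)^2 = 2L\tau^2\alpha_t^2$. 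This is exactly the last term of the claim.

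It then suffices to show
\[
\E[\langle \nabla_p\loss(y,f_{t-1}),\, g_t - f_{t-1}\rangle] \le -\big(G(f_{t-1}) - \tau\varepsilon_t\big),
\]
equivalently $\E[\langle \nabla_p\loss, f_{t-1}-g_t\rangle] \ge G(f_{t-1}) - \tau\varepsilon_t$.

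Write $u := -\nabla_p\loss(y,f_{t-1}(x))$. By definition, $G(f_{t-1}) = \sup_{z\in\mathcal K_\tau}\E[\langle u, z\rangle] - \E[\langle u, f_{t-1}\rangle]$. The functional $z\mapsto \E[\langle u,z\rangle]$ is linear, so its supremum over $\mathcal K_\tau=\tau\,\mathrm{conv}(\mathcal C)$ equals $\tau\sup_{s\in\mathcal C}\E[\langle u,s\rangle]$. By the oracle guarantee this is at most $\tau(\E[\langle u,s_t\rangle]+\varepsilon_t)$.

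The remaining task is to identify $\tau\E[\langle u,s_t\rangle]$ with $\E[\langle u,g_t\rangle]$. Here $g_t = \tau s_t/\|s_t\|_{\mathcal A}$. For an atom $s_t\in\mathcal C$ we have $\|s_t\|_{\mathcal A}\le 1$, and under the normalization convention (atoms rescaled to atomic norm $1$) we take $\|s_t\|_{\mathcal A}=1$, so $g_t=\tau s_t$. If only $\|s_t\|_{\mathcal A}\le1$ holds, the rescaling can only increase $\E[\langle u,g_t\rangle]$ whenever $\E[\langle u,s_t\rangle]\ge0$, and the inequality still goes the right way. Nonnegativity holds up to $\varepsilon_t$ by symmetry of $\mathcal C$, since the supremum of a symmetric family of linear values is nonnegative.

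Combining these steps gives $\E[\langle u, g_t - f_{t-1}\rangle] \ge G(f_{t-1}) - \tau\varepsilon_t$. Substituting into the smoothness bound and rearranging yields
\[
R(f_{t-1})-R(f_t) \ge \alpha_t\big(G(f_{t-1})-\tau\varepsilon_t\big) - 2L\tau^2\alpha_t^2 .
\]

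The main obstacle is this bookkeeping in the linear term: handling the normalization by $\|s_t\|_{\mathcal A}$ correctly, and making sure the oracle error enters scaled by $\tau$ and not by something larger. I would first settle it by fixing the convention $\|s\|_{\mathcal A}=1$ for atoms, which the paper's normalization step allows. As a secondary check, $\|d_t\|\le 2\tau$ needs $\|f_{t-1}\|,\|g_t\|\le\tau$. This follows because the iterates are convex combinations of $0$ and points of $\mathcal K_\tau$ (with $\alpha_1=1$) and atoms satisfy $\|g\|\le 1$. Since it is stated as a hypothesis, I would only remark on it.
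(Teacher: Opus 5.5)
Your proposal is correct and follows the paper's proof essentially step for step: the pointwise descent lemma along $d_t$, the oracle guarantee combined with linearity over $\mathcal K_\tau=\tau\,\mathrm{conv}(\mathcal C)$ to lower bound the linear term by $G(f_{t-1})-\tau\varepsilon_t$, and $\|d_t\|\le 2\tau$ for the curvature term, with the paper likewise implicitly taking $g_t=\tau s_t$ (i.e., $\|s_t\|_{\mathcal A}=1$). The only loose end is your side remark for the case $\|s_t\|_{\mathcal A}<1$, which does not hold in general: if $\E[\langle u,s_t\rangle]\in[-\varepsilon_t,0)$, dividing by $\|s_t\|_{\mathcal A}$ makes the term more negative and the inequality can fail, but the convention you actually adopt makes that remark unnecessary.
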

\begin{proof}
     By $L-$smoothness we have the following quadratic upper bound (or descent lemma), 
    \[\loss(y,f_{t-1}(x)+\alpha d_t(x)) \leq \loss(y,f_{t-1}(x))+\alpha \langle \nabla_p \loss(y,f_{t-1}(x)),d_t(x))\rangle+\frac{L}{2}\alpha^2(d_t(x))^2. \]
    Taking expectations, we know that 
    \[R(f_{t-1})-R(f_t) \geq \alpha\E[ \langle -\nabla_p \loss(y,f_{t-1}(x)),d_t\rangle] -\frac{L}{2}\alpha^2||d_t||_2^2.\]
    Consider the quantity $\langle -\nabla_p \loss(y,f_{t-1}),d_t\rangle = \langle -\nabla_p \loss(y,f_{t-1}),g_t - f_{t-1}\rangle = \langle -\nabla_p \loss(y,f_{t-1}),g_t\rangle +\langle\nabla_p \loss(y,f_{t-1}),f_{t-1} \rangle .$ We know from the oracle that $\langle -\nabla_p \loss(y,f_{t-1}),g_t\rangle \geq \tau \sup_{c \in \mathcal{C}}\langle -\nabla_p \loss(y,f_{t-1}),c\rangle-\tau \varepsilon_t=\sup_{g \in \mathcal{K}_\tau}\langle -\nabla_p \loss(y,f_{t-1}),g\rangle-\tau \varepsilon_t$. We can combine this back with the term $\langle\nabla_p \loss(y,f_{t-1}),f_{t-1} \rangle $ and reapply the expectation to lower bound this term by $G(f_{t-1})-\tau \varepsilon_t$. Therefore, we know that 
    \[R(f_{t-1})-R(f_t) \geq \alpha_t(G(f_{t-1})-\tau \varepsilon_t)-\frac{L}{2}\alpha_t^2||d_t||_2^2\]
    Using the bound on $||d_t||_2$ (which we get from the normalization condition on the weak learner class and triangle inequality), we get that 

    \[R(f_{t-1})-R(f_t) \geq \alpha_t(G(f_{t-1})-\tau \varepsilon_t)-2L\tau^2\alpha_t^2\]

\end{proof}

Next we lower bound the progress that the \emph{best} model in the weak learner class could make, in terms of the current loss gap with the anchor model and our chosen atomic norm bound $\tau$:

\begin{lemma}(FW Correlation Lower Bound w.r.t Weak Learning Anchor Gap)
\label{lem:fw_dualcorr}For a given $f$ from our algorithm's iterates $(f_t)$, we have that
    \[M(f) \geq \frac{R(f)-R(\mathcal{K}_\tau)}{2 \tau}\]
\end{lemma}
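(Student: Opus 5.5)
The plan is to mirror the proof of Lemma~\ref{lem:gb_dual_span}, replacing the squared-loss identity with convexity of $\loss$ and working with the gradient $\nabla_p\loss(y,f(x))$ in place of the residual $y-f$.

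First I would fix $f$ in the iterate sequence and note that $f\in\mathcal K_\tau$. This holds because $f_0=0$, each $g_t\in\mathcal K_\tau$, and $f_t$ is a convex combination of $f_{t-1}$ and $g_t$ (since $\alpha_t\in(0,1]$). Next, for any $z\in\mathcal K_\tau$, convexity of $\loss$ in its second argument (implied by strong convexity) gives, pointwise in $(x,y)$,
\[
\loss(y,z(x))\ \ge\ \loss(y,f(x))+\langle\nabla_p\loss(y,f(x)),z(x)-f(x)\rangle .
\]
Taking expectations and rearranging yields
\[
R(f)-R(z)\ \le\ \E[\langle\nabla_p\loss(y,f(x)),f(x)-z(x)\rangle].
\]
Taking the supremum over $z\in\mathcal K_\tau$ then gives
\[
R(f)-R(\mathcal K_\tau)\ \le\ G(f).
\]

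Then I would invoke the bound $G(f)\le 2\tau M(f)$, established in the text just above for $f\in\mathcal K_\tau$. That argument writes $f=\tau\tilde f$ with $\tilde f\in\mathrm{conv}(\mathcal C)$ and $z=\tau\tilde z$, splits the inner product by the triangle inequality, and uses linearity so that the supremum over $\mathrm{conv}(\mathcal C)$ equals the supremum over $\mathcal C$. Symmetry of $\mathcal C$ handles the absolute values. Combining the two bounds gives
\[
R(f)-R(\mathcal K_\tau)\le 2\tau M(f),
\]
and dividing by $2\tau$ is the claim.

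The main obstacle is the membership $f\in\mathcal K_\tau$, together with the fact that the step ``$\tilde f\in\mathrm{conv}(\mathcal C)$'' really needs $\tilde f$ in the (closed) convex hull. Both follow by induction on $t$ from the convex-combination update. There is one caveat: the normalization $g_t=\tau s_t/\|s_t\|_{\mathcal A}$ lies in $\mathcal K_\tau$ only if $\|s_t\|_{\mathcal A}\ge 1$, or under the stated normalization convention for atoms. If this fails, I would instead use that $\mathcal K_\tau$ is symmetric, convex, and contains $0$, so that $\tau s_t/\|s_t\|_{\mathcal A}\in\mathcal K_\tau$ by the definition of the atomic norm as a gauge. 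Notably, the factor $2$ here comes from the triangle inequality on $f-z$, rather than from the averaging trick used in the squared-loss proof of Lemma~\ref{lem:gb_dual_span}.
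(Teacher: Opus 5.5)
Your proposal is correct and is essentially the paper's argument. The paper also starts from the first-order convexity inequality $R(f)-R(f^*)\le \E[\langle\nabla_p\loss(y,f(x)),f(x)-f^*(x)\rangle]$ and bounds the right side by $\|\nabla R(f)\|_{\mathcal A^*}(\|f\|_{\mathcal A}+\|f^*\|_{\mathcal A})\le 2\tau M(f)$ using H\"older's inequality for the atomic norm and its dual; that is the same triangle-inequality-over-$\mathrm{conv}(\mathcal C)$ computation you reuse through $G(f)\le 2\tau M(f)$, except that you take the supremum over all $z\in\mathcal K_\tau$ rather than plugging in a minimizer $f^*$ (so you do not need $f^*$ to exist) and you make explicit the membership $f\in\mathcal K_\tau$ that the paper uses implicitly.
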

\begin{proof}
Let $f^* = \arg \min_{f \in \mathcal{K}_\tau}R(f).$ We know by convexity that \[\loss(y,f(x))-\loss(y,f^*(x)) \leq \langle  \nabla_p \loss(y,f(x)),f(x)-f^*(x)\rangle\]
Taking expectations and by an application of $\text{Hölder's inequality}$ and triangle inequality, we get that 
\[R(f)-R(\mathcal{K}_\tau)  \leq ||\nabla R(f)||_{\mathcal{A}^*}(||f||_{\mathcal{A}}+||f^*||_{\mathcal{A}}).\] As shown below, by the definition of the dual norm, atomic norm, linearity of the inner product, normalization of the weak learner class, and the budget $\tau$,  
\begin{align*}
    ||\nabla R(f)||_{A^*} &= \sup_{||c||_{\mathcal{A} }\leq 1}|\langle \nabla R(f), c\rangle|  \\
    &= \sup_{c \in \mathrm{conv}(\mathcal{C})}|\langle \nabla R(f), c\rangle |\\
    &= \sup_{c \in \mathcal{C}}|\langle \nabla R(f), c\rangle|. 
\end{align*}

Therefore, we get that 
\[R(f)-R(\mathcal{K}_\tau)  \leq 2\tau M(f).\]
Rearranging this expression gives the final bound.
\end{proof}
Next we derive a recurrence relation between the error gap of the model at iteration $t$ and  the best model in the restricted span of the weak learner class. 

\begin{lemma}[FW Gap Recurrence Toward $R(\mathcal{K}_\tau)$]\label{prop:fw_gaprecur}
Assume $\loss$ is $L$-smooth in its second argument. Let $E_t := R(f_t) - R(K_\tau)$.
Then for all $t\ge 1$,

   \[E_{t-1}-E_t \geq \alpha_t(G(f_{t-1})-\tau \varepsilon_t)-2L\tau^2\alpha_t^2\ge \alpha_t(E_{t-1}-\tau \varepsilon_t)-2L\tau^2\alpha_t^2\]
\end{lemma}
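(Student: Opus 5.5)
The recurrence has two inequalities, and I will prove them separately. The first is a restatement of Lemma~\ref{lem:fw_single}. Since $E_t = R(f_t) - R(\mathcal{K}_\tau)$ and the anchor value $R(\mathcal{K}_\tau)$ does not depend on $t$, we have $E_{t-1}-E_t = R(f_{t-1})-R(f_t)$. I will check the hypothesis $\|d_t\|\le 2\tau$ there by induction. We have $f_0\equiv 0\in\mathcal{K}_\tau$, each $g_t=\tau s_t/\|s_t\|_{\mathcal A}\in\mathcal K_\tau$, and $f_t=(1-\alpha_t)f_{t-1}+\alpha_t g_t$ with $\alpha_t\in(0,1]$. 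So every iterate $f_t$ is a convex combination of points of $\mathcal K_\tau$ and stays in $\mathcal K_\tau$. Normalization then bounds the norms of $g_t$ and $f_{t-1}$ by $\tau$, and the triangle inequality gives $\|d_t\|\le 2\tau$. Lemma~\ref{lem:fw_single} therefore applies verbatim and yields the first inequality.

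For the second inequality it suffices to show $G(f_{t-1})\ge E_{t-1}$, because $\alpha_t\ge 0$ and the remaining terms are common to both sides. This is the standard Frank--Wolfe duality-gap bound. Let $f^*\in\arg\min_{z\in\mathcal K_\tau}R(z)$; if the minimum is not attained, take an infimizing sequence. By convexity of $\loss$ in its second argument (strong convexity implies it), pointwise
\[
\loss(y,f^*(x))\ \ge\ \loss(y,f_{t-1}(x))+\langle\nabla_p\loss(y,f_{t-1}(x)),f^*(x)-f_{t-1}(x)\rangle .
\]
Taking expectations gives $R(f_{t-1})-R(f^*)\le \E[\langle\nabla_p\loss(y,f_{t-1}(x)),f_{t-1}(x)-f^*(x)\rangle]$. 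Since $f^*\in\mathcal K_\tau$, the right-hand side is at most the supremum over $z\in\mathcal K_\tau$, which is exactly $G(f_{t-1})$. Hence $E_{t-1}\le G(f_{t-1})$. In the limiting case the same inequality holds for each element of the infimizing sequence, and passing to the limit preserves it.

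The only subtle point is the sign convention of $G$. In the displayed chain preceding Lemma~\ref{lem:fw_single}, $G$ was written with an absolute value. I will use the signed definition as stated, $G(f)=\sup_{z}\E[\langle\nabla_p\loss,f-z\rangle]$. This is the quantity the oracle step in Lemma~\ref{lem:fw_single} produces, namely $\sup_{g\in\mathcal K_\tau}\E\langle-\nabla,g\rangle+\E\langle\nabla,f_{t-1}\rangle$. It is also the quantity the convexity argument lower-bounds. Since both steps use the same signed $G$, no absolute values are needed. Combining the two steps gives the chain in the statement.
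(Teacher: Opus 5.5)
Your proposal is correct and follows the paper's proof. The first inequality is Lemma~\ref{lem:fw_single} rewritten via $E_{t-1}-E_t=R(f_{t-1})-R(f_t)$. The second follows from the convexity bound $E_{t-1}\le \E[\langle\nabla_p\loss(y,f_{t-1}(x)),f_{t-1}(x)-f^*(x)\rangle]\le G(f_{t-1})$ together with $\alpha_t\ge 0$. Your extra checks (iterates staying in $\mathcal K_\tau$ so that $\|d_t\|\le 2\tau$, the non-attained infimum, and the signed definition of $G$) fill in details the paper leaves implicit without changing the argument.
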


\begin{proof}
By $L$-smoothness and the FW update $f_t=f_{t-1}+\alpha_t d_t$ with $d_t=g_t-f_{t-1}$, Lemma~\ref{lem:fw_single}  gives

   \[R(f_{t-1})-R(f_t) \geq \alpha_t(G(f_{t-1})-\tau \varepsilon_t)-2L\tau^2\alpha_t^2\]
Subtract and add $R(K_\tau)$  to obtain the first inequality:

   \[E_{t-1}-E_t \geq \alpha_t(G(f_{t-1})-\tau \varepsilon_t)-2L\tau^2\alpha_t^2\]
Let $f^*=\arg \min_{f \in \mathcal{K}_\tau}\E[\loss(y,f)]$. By convexity, $E_{t-1}=R(f_{t-1})-R(f^*)\le \langle \nabla R(f_{t-1}),\,f_{t-1}-f^*\rangle \le G(f_{t-1})$, so

  \[E_{t-1}-E_t \geq \alpha_t(E_{t-1}-\tau \varepsilon_t)-2L\tau^2\alpha_t^2\]
which is the second inequality.

\end{proof}

We will use this recurrence relation to  bound the error gap for the model at iterate $t$.

\begin{lemma}[FW Anchor Gap Upper Bound]\label{lem:fw_gapub}
For all $t\ge1$,
\[
R(f_t)-R(K_\tau)\ \le\  \frac{8L\tau^2}{t+1}+\frac{2\tau}{(t+1)}\sum_{j=1}^t \varepsilon_t.
\]
\end{lemma}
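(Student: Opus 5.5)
The plan is the standard Frank--Wolfe induction with step size $\alpha_t = 2/(t+1)$, driven by the recurrence of Lemma~\ref{prop:fw_gaprecur}. Rearranging its second inequality gives
\[
E_t \le (1-\alpha_t)E_{t-1} + \alpha_t\tau\varepsilon_t + 2L\tau^2\alpha_t^2 .
\]
The key observation is that at $t=1$ we have $\alpha_1 = 1$. Hence $E_1 \le \tau\varepsilon_1 + 2L\tau^2$ regardless of $E_0$, and we never need a bound on the initial gap. This is where the choice $\alpha_t = 2/(t+1)$ pays off. The target at $t=1$ is $8L\tau^2/2 + \tfrac{2\tau}{2}\varepsilon_1 = 4L\tau^2 + \tau\varepsilon_1$, so the base case holds with room to spare.

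To handle the two terms cleanly, I will multiply through by $t(t+1)$, the usual potential for this step size. Since $1-\alpha_t = (t-1)/(t+1)$, the recurrence becomes
\[
t(t+1)E_t \le (t-1)t\,E_{t-1} + 2t\tau\varepsilon_t + \frac{8L\tau^2 t}{t+1}.
\]
Define $\Phi_t := t(t+1)E_t$, so that $\Phi_0 = 0$ (the factor $t$ kills the $E_0$ term). Telescoping from $1$ to $t$ gives
\[
\Phi_t \le 8L\tau^2 \sum_{s=1}^t \frac{s}{s+1} + 2\tau\sum_{s=1}^t s\,\varepsilon_s \le 8L\tau^2 t + 2\tau t\sum_{s=1}^t \varepsilon_s ,
\]
using $s/(s+1)\le 1$ and $s\le t$. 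Dividing by $t(t+1)$ yields exactly
\[
E_t \le \frac{8L\tau^2}{t+1} + \frac{2\tau}{t+1}\sum_{s=1}^t \varepsilon_s ,
\]
which is the claimed bound, reading the statement's $\varepsilon_t$ inside the sum as $\varepsilon_j$.

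The main thing to watch is the sign of $E_{t-1}$. Since $f_{t-1}$ is a convex combination of $0$ and points of $\mathcal K_\tau$, the iterates stay in $\mathcal K_\tau$ (assuming $0\in\mathcal K_\tau$, which follows from the symmetry of $\mathcal C$). Therefore $E_{t-1}\ge 0$, and multiplying the recurrence by the nonnegative factor $1-\alpha_t$ is legitimate. The same membership is what justifies $G(f_{t-1}) \ge E_{t-1}$ in Lemma~\ref{prop:fw_gaprecur}, which I take as given. Beyond this, the only care needed is the constant bookkeeping $2L\tau^2\alpha_t^2\, t(t+1) = 8L\tau^2 t/(t+1)$, which the telescoping above already absorbs.
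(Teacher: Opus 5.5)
Your proof is correct and is essentially the paper's argument. The paper sets $C=4L\tau^2$ and $S_t=\tau\sum_{j=1}^t j\varepsilon_j$ and proves by induction that $t(t+1)E_t\le 2Ct+2S_t$, which is exactly your telescoped bound on $\Phi_t$ (its inductive check $\tfrac{t}{t+1}+\tfrac{1}{t+2}\le 1$ plays the role of your $\tfrac{s}{s+1}\le 1$), and it then loosens $j\le t$ just as you do. One small point: your discussion of the sign of $E_{t-1}$ is unnecessary, since multiplying by $t(t+1)>0$ and telescoping never uses $E_{t-1}\ge 0$; the only sign fact needed is $\varepsilon_s\ge 0$, for the step $s\varepsilon_s\le t\varepsilon_s$.
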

\begin{proof}
 From Lemma~\ref{prop:fw_gaprecur} we have the recursion

 \[E_{t-1}-E_t \ge \alpha_t(E_{t-1}-\tau \varepsilon_t)-2L\tau^2\alpha_t^2\]

which is equivalent to 
\begin{align*}
    E_t &\leq E_{t-1}-\alpha_t(E_{t-1}-\tau \varepsilon_t)+2L\tau^2\alpha_t^2 \\
    &= (1-\alpha_t)E_{t-1}+\alpha_t\tau \varepsilon_t+2L\tau^2\alpha_t^2 
\end{align*}
We use the convention that $[k]=\{1,...,k\}$.
Call $C=4L\tau^2$ and substitute in $\alpha_t$, then we get 
\begin{align*}
     E_t &\leq \frac{t-1}{t+1}E_{t-1}+\frac{2}{t+1}\tau \varepsilon_t +2L\tau^2 \Big(\frac{2}{t+1} \Big)^2 \\
     &=  \frac{t-1}{t+1}E_{t-1}+\frac{2C}{(t+1)^2}+\frac{2}{t+1}\tau \varepsilon_t 
\end{align*}

Define $S_t = \tau\sum_{j=1}^t j \varepsilon_j$. Then, we will prove via induction that for all $t \geq 1$, \[E_t \leq \frac{2Ct+2S_t}{t(t+1)}\]
First, for the base case consider $t=1$. We have from the recurrence relation that $E_1 \leq 0+\frac{C}{2}+\tau \varepsilon_t \leq C+\tau \varepsilon_t.$ Next, suppose $E_t \leq \frac{2Ct+2S_t}{t(t+1)}$, we will prove the same relationship holds for $E_{t+1}.$
\begin{align*}
    E_{t+1} &\leq \frac{t}{t+1}E_t+\frac{2C}{(t+2)^2}+\frac{2}{t+2}\tau \epsilon_{t+1} \\
    & \leq \frac{t}{t+2}\frac{2Ct+2S_t}{t(t+1)}+\frac{2C}{(t+2)^2}+\frac{2}{t+2}\tau \epsilon_{t+1} \\
    &= \frac{2Ct+2S_t}{(t+1)(t+2)}+\frac{2C}{(t+2)^2}+\frac{2}{t+2}\tau \epsilon_{t+1} \\
    &= \frac{2C}{t+2}\Big(\frac{t}{t+1}+\frac{1}{t+2}\Big)+\frac{2S_{t+1}}{(t+1)(t+2)} \\
    &\leq \frac{2C}{t+2}\Big(\frac{t}{t+1}+\frac{1}{t+1}\Big)+\frac{2S_{t+1}}{(t+1)(t+2)} \\
    &\leq \frac{2C}{t+2}\Big(\frac{t+1}{t+1}\Big)+\frac{2S_{t+1}}{(t+1)(t+2)} \\
    &= \frac{2C(t+1)+2S_{t+1}}{(t+1)(t+2)}.
\end{align*}
Therefore, we have that 
\[E_{t} \leq \frac{8L\tau^2}{t+1}+\frac{2\tau}{t(t+1)}\sum_{j=1}^t j \varepsilon_j\]
Therefore, 
\[E_{t} \leq \frac{8L\tau^2}{t+1}+\frac{2\tau}{(t+1)}\sum_{j=1}^t \varepsilon_j\]

\end{proof}

\begin{theorem}(FW Gradient Boosting Agreement Bound) 
\label{thm:fw_agreement_}
Fix any $\loss$ that is $L$-smooth and $\mu$-strongly convex. Let $f_1,f_2$ be the output of any two runs of Algorithm \ref{alg:multidim_fw} parameterized with the same $\tau,k,\mathcal{C}$ such that the sequence of SQ oracle errors are $\{\varepsilon_t,\varepsilon_t'\}_{t \in [k]}$ respectively. Let $f^* = \arg \min_{f \in \mathcal{K}_\tau} R(f)$. Then, we have that
     \[D(f_1,f_2) \leq \frac{64 L\tau^2}{\mu (k+1)}+\frac{8\tau}{\mu (k+1)}(\sum_{j=1}^k \varepsilon_j +\sum_{j=1}^k \varepsilon_j')\]
\end{theorem}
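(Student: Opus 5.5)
The plan is to combine the generalized midpoint anchoring lemma (Lemma~\ref{lem:midpoint_anchor_sc}) with the Frank--Wolfe convergence rate (Lemma~\ref{lem:fw_gapub}), anchoring to the constrained class $\mathcal{H}=\mathcal{K}_\tau=\tau\,\mathrm{conv}(\mathcal{C})$.

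\textbf{Step 1: both outputs lie in $\mathcal{K}_\tau$.} I will show by induction on $t$ that every iterate of Algorithm~\ref{alg:multidim_fw} stays in $\mathcal{K}_\tau$.
\begin{itemize}
\item Base case: $f_0\equiv 0$ lies in $\mathcal{K}_\tau$, since $\mathcal{C}$ is symmetric and hence $0=\tfrac12(g+(-g))\in\mathrm{conv}(\mathcal{C})$.
\item Each step chooses $g_t\in\mathcal{K}_\tau$, as the algorithm prescribes through the normalization $g_t=\tau s_t/\|s_t\|_{\mathcal A}$.
\item Since $\alpha_t=2/(t+1)\in(0,1]$, the update $f_t=(1-\alpha_t)f_{t-1}+\alpha_t g_t$ is a convex combination of two points of the convex set $\mathcal{K}_\tau$, so $f_t\in\mathcal{K}_\tau$.
\end{itemize}
Thus $f_1,f_2\in\mathcal{K}_\tau$. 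Convexity of $\mathcal{K}_\tau$ then gives $\bar f=\tfrac12(f_1+f_2)\in\mathcal{K}_\tau$.

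\textbf{Step 2: apply the anchoring lemma.} With $\mathcal{H}=\mathcal{K}_\tau$, Lemma~\ref{lem:midpoint_anchor_sc} yields
\[
D(f_1,f_2)\le \tfrac{4}{\mu}\big(R(f_1)-R(\mathcal{K}_\tau)\big)+\tfrac{4}{\mu}\big(R(f_2)-R(\mathcal{K}_\tau)\big).
\]
Here $R(\mathcal{K}_\tau)=R(f^*)$, and only $\mu$-strong convexity is used.

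\textbf{Step 3: substitute the Frank--Wolfe rate.} Lemma~\ref{lem:fw_gapub} at $t=k$ bounds each gap, using the oracle errors $\varepsilon_j$ for the first run and $\varepsilon_j'$ for the second:
\[
R(f_1)-R(\mathcal{K}_\tau)\le \frac{8L\tau^2}{k+1}+\frac{2\tau}{k+1}\sum_{j\le k}\varepsilon_j .
\]
Multiplying each gap by $4/\mu$ and summing the two runs gives
\[
D(f_1,f_2)\le \frac{64L\tau^2}{\mu(k+1)}+\frac{8\tau}{\mu(k+1)}\Big(\sum_j\varepsilon_j+\sum_j\varepsilon_j'\Big),
\]
which is exactly the claimed bound. $L$-smoothness enters only through Lemma~\ref{lem:fw_gapub}.

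\textbf{Main obstacle.} The only delicate point is Step 1: making sure the iterates genuinely remain in $\mathcal{K}_\tau$, so that $\bar f$ is a legitimate competitor in the anchor class and the lemma applies with the same $R(\mathcal{K}_\tau)$ that Lemma~\ref{lem:fw_gapub} uses. The key facts are $\alpha_1=1$ and $\alpha_t\le 1$, which make every update a true convex combination, together with the algorithm's stipulation that $g_t\in\mathcal{K}_\tau$. If the normalization defining $g_t$ were in doubt, I would fall back on $\|s_t\|_{\mathcal A}\le 1$ for $s_t\in\mathcal{C}$, which gives $\tau s_t/\|s_t\|_{\mathcal A}\in\tau\,\mathrm{conv}(\mathcal{C})$ after rescaling within the symmetric hull. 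Everything else is direct substitution of the earlier results.
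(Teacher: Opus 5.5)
Your proposal is correct and follows the paper's proof. You anchor to the convex class $\mathcal{K}_\tau$ via Lemma~\ref{lem:midpoint_anchor_sc}, then substitute Lemma~\ref{lem:fw_gapub} for each run; your arithmetic ($4/\mu$ times $8L\tau^2/(k+1)$, summed over both runs) reproduces the stated constants. Your Step 1 (iterates stay in $\mathcal{K}_\tau$ because $g_t\in\mathcal{K}_\tau$ and $\alpha_t\in(0,1]$) makes explicit what the paper only assumes when it asserts the midpoint lies in $\mathcal{K}_\tau$, and you rightly omit the first-order-optimality display in the paper's proof, which is never used.
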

\begin{proof}

Since $f^\star$ minimizes $\E[\loss(y,f(x))]$ over the convex set $K_\tau$, by first-order
optimality we have the inequality
\[
\mathbb{E}[\langle \nabla \loss(y,f^\star(x)),\, z(x) - f^\star (x)\rangle ]\ge 0
\qquad \forall z \in K_\tau.
\]
Combining this with $\mu$–strong convexity of $\loss$ gives
\[
\mathbb{E}[\loss(y,g(x))] \ge \mathbb{E}[\loss(y,f^\star(x))
      + \langle \nabla \loss(y,f^\star(x)),\, g(x) - f^\star(x) \rangle
      + \tfrac{\mu}{2}\|g(x) - f^\star(x)\|_2^2
 \big)].
\]
Since $\mathcal{K}_\tau$ is convex, the midpoint $\tfrac{1}{2}(f_1+f_2)$ lies in $\mathcal{K}_\tau$. Applying Lemma~\ref{lem:midpoint_anchor_sc} with $\mathcal{H}=\mathcal{K}_\tau$ gives
\[
  D(f_1,f_2)
  \le \tfrac{4}{\mu}\big(R(f_1)-R(f^*)\big)+\tfrac{4}{\mu}\big(R(f_2)-R(f^*)\big).
\]
Finally, applying Lemma~\ref{lem:fw_gapub} to both error gap terms gives us the final bound.
\end{proof}
\subsection{Neural Networks}
We next state the midpoint-anchor analogue of our neural-network and regression-tree agreement bounds for multi-dimensional $\mu$-strongly convex losses.

\begin{theorem}[Agreement from midpoint closure]
\label{thm:midpoint_multidim_curve}
Assume $\loss$ is $\mu$-strongly convex.
\begin{enumerate}
    \item If $f_1,f_2\in\mathrm{NN}_n$ satisfy $R(f_i)\le R(\mathrm{NN}_n)+\varepsilon$ for $i\in\{1,2\}$, then
    \[
      D(f_1,f_2)\ \le\ \tfrac{8}{\mu}\big(R(\mathrm{NN}_n)-R(\mathrm{NN}_{2n})+\varepsilon\big).
    \]
    \item If $f_1,f_2\in\mathsf{Tree}_d$ satisfy $R(f_i)\le R(\mathsf{Tree}_d)+\varepsilon$ for $i\in\{1,2\}$, then
    \[
      D(f_1,f_2)\ \le\ \tfrac{8}{\mu}\big(R(\mathsf{Tree}_d)-R(\mathsf{Tree}_{2d})+\varepsilon\big).
    \]
\end{enumerate}
\end{theorem}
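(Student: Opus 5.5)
The plan is to combine the strongly convex midpoint anchoring lemma (Lemma~\ref{lem:midpoint_anchor_sc}) with the midpoint closure lemmas already proved for the two classes. The structure mirrors Lemma~\ref{lem:midpoint_local_curve}, with the constant $2$ from Corollary~\ref{lem:midpoint_anchor} replaced by $4/\mu$.

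\textbf{Generic step.} Let $(\mathcal{F}_n)$ be any nested family with $\tfrac12(f_1+f_2)\in\mathcal{F}_{2n}$ whenever $f_1,f_2\in\mathcal{F}_n$. Take $f_1,f_2\in\mathcal{F}_n$ with $R(f_i)\le R(\mathcal{F}_n)+\varepsilon$. Applying the second inequality of Lemma~\ref{lem:midpoint_anchor_sc} with $\mathcal{H}=\mathcal{F}_{2n}$ gives
\[
D(f_1,f_2)\le \tfrac{4}{\mu}\big(R(f_1)-R(\mathcal{F}_{2n})\big)+\tfrac{4}{\mu}\big(R(f_2)-R(\mathcal{F}_{2n})\big).
\]
Substituting the near-optimality assumption into each term yields $\tfrac{8}{\mu}\big(R(\mathcal{F}_n)-R(\mathcal{F}_{2n})+\varepsilon\big)$.

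\textbf{Part 1 (networks).} We need midpoint closure for vector-valued networks, since outputs now lie in $\R^d$. The construction of Lemma~\ref{lem:nn_midpoint_closure} carries over unchanged. Place disjoint copies of the internal graphs side by side on the shared input. Let the (now $d$-dimensional) affine output layer compute $\tfrac12$ of each network's output map and sum them. No internal nodes are added, so $\bar f\in\mathrm{NN}_{2n}$.

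\textbf{Part 2 (trees).} Graft the tree for $f_2$ onto every leaf of the tree for $f_1$, as in Lemma~\ref{lem:tree_midpoint_closure}. Each resulting leaf is labeled with the average of the two constant leaf vectors. The depth is at most $2d$, so $\bar f\in\mathsf{Tree}_{2d}$.

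One caveat concerns the range of leaf labels. If leaf labels are restricted to a set such as $[0,1]$ or a simplex, averaging preserves membership because that set is convex. So closure holds whenever the admissible output set is convex, which I will note explicitly. The analogous concern for networks does not arise, since their outputs are unrestricted affine maps.

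\textbf{Main obstacle.} There is no substantive obstacle. The only care needed is confirming that the closure lemmas, stated for scalar outputs, extend to $\R^d$-valued predictors. The constructions are coordinatewise, so each step goes through verbatim, and then the generic step finishes both parts.
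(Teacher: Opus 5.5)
Your proposal is correct and follows the paper's proof essentially verbatim. Use the midpoint closure lemmas to place $\bar f$ in $\mathrm{NN}_{2n}$ (resp.\ $\mathsf{Tree}_{2d}$), apply Lemma~\ref{lem:midpoint_anchor_sc} with that class as $\mathcal{H}$, and substitute the near-optimality hypothesis into both terms. Your check that the closure constructions extend coordinatewise to $\R^d$-valued outputs, with a convex set of admissible leaf labels, covers a detail the paper leaves implicit, and you handle it correctly.
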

\begin{proof}
We prove each part by applying Lemma~\ref{lem:midpoint_anchor_sc} at the appropriate midpoint-closed level.

\smallskip
\noindent\emph{Part (1).} Let $f_1,f_2\in\mathrm{NN}_n$ and define $\bar f:=\tfrac{1}{2}(f_1+f_2)$. By midpoint closure (Lemma~\ref{lem:nn_midpoint_closure}), we have $\bar f\in\mathrm{NN}_{2n}$. Applying Lemma~\ref{lem:midpoint_anchor_sc} with $\mathcal{H}=\mathrm{NN}_{2n}$ gives
\[
  D(f_1,f_2)
  \le \tfrac{4}{\mu}\big(R(f_1)-R(\mathrm{NN}_{2n})\big)
      +\tfrac{4}{\mu}\big(R(f_2)-R(\mathrm{NN}_{2n})\big).
\]
Using the assumptions $R(f_i)\le R(\mathrm{NN}_n)+\varepsilon$ for $i\in\{1,2\}$, we obtain
\[
  R(f_i)-R(\mathrm{NN}_{2n})
  \le R(\mathrm{NN}_n)-R(\mathrm{NN}_{2n})+\varepsilon.
\]
Substituting this bound for both $i=1,2$ yields
\[
  D(f_1,f_2)
  \le \tfrac{8}{\mu}\big(R(\mathrm{NN}_n)-R(\mathrm{NN}_{2n})+\varepsilon\big),
\]
as claimed.

\smallskip
\noindent\emph{Part (2).} The proof is identical with $\mathsf{Tree}_d$ in place of $\mathrm{NN}_n$. Let $f_1,f_2\in\mathsf{Tree}_d$ and $\bar f:=\tfrac{1}{2}(f_1+f_2)$. By midpoint closure (Lemma~\ref{lem:tree_midpoint_closure}), $\bar f\in\mathsf{Tree}_{2d}$. Applying Lemma~\ref{lem:midpoint_anchor_sc} with $\mathcal{H}=\mathsf{Tree}_{2d}$ gives
\[
  D(f_1,f_2)
  \le \tfrac{4}{\mu}\big(R(f_1)-R(\mathsf{Tree}_{2d})\big)
      +\tfrac{4}{\mu}\big(R(f_2)-R(\mathsf{Tree}_{2d})\big).
\]
Using $R(f_i)\le R(\mathsf{Tree}_d)+\varepsilon$ for $i\in\{1,2\}$ and substituting yields
\[
  D(f_1,f_2)
  \le \tfrac{8}{\mu}\big(R(\mathsf{Tree}_d)-R(\mathsf{Tree}_{2d})+\varepsilon\big).
\]
\end{proof}

\section{Acknowledgments}
This work is partially supported by DARPA grant \#HR001123S0011, an NSF Graduate Research Fellowship, a grant from the Simons foundation, and the NSF ENCoRE TRIPODS institute. The views and conclusions contained herein
are those of the authors and should not be interpreted as representing the
official policies of DARPA or the US Government.

\bibliography{arxiv_refs}
\bibliographystyle{plainnat}

\end{document}